\newtheorem{theorem}{Theorem}
\newtheorem{lemma}{Lemma}
\newenvironment{talign}
 {\align}
 {\endalign}
\newif\ifdraft
\newif\ifincludeapp
\newif\ifsqueeze
\newcommand{\defaultfigsize}{.80\columnwidth}
\newcommand{\defaultfigsize}{1.0\columnwidth}
\newcommand{\trset}{\mathcal{D}}
\newcommand{\tlr}{\texttt{TLR}}
\newcommand{\tsi}{\texttt{TSI}}
\newcommand{\atkr}{\texttt{ATKR}}
\newcommand{\evalpool}{\mathcal{T}}
\newcommand{\atkrset}{\mathbf{\Delta}}
\newcommand{\evalx}{\x^{\text{ev}}_i}
\newcommand{\evaly}{y^{\text{ev}}_i}
\newcommand{\evalitem}{(\evalx, \evaly)}
\newcommand{\IR}{\mathrm{IR}}
\newcommand{\labeledballs}{\mathcal{B}^{\mathrm{lab}}}
\newcommand{\atkrbudget}{b}
\newcommand{\atkrtimebudget}{T}
\newcommand{\atkrlabelset}{\mathcal{C}}
\newcommand{\score}{\mathrm{score}}
\newcommand{\coverage}{\mathrm{coverage}}
\newcommand{\potentialedge}{\mathbf e}
\newcommand{\potentialedgeelem}{\ball}
\newcommand{\learner}{\mathcal{A}}
\newcommand{\hypergraph}{G}
\newcommand{\chg}{\texttt{CHOPPA}}
\newcommand{\greedyatk}{\(\texttt{GIT}^2\texttt{ACHOPPA}\)}
\newcommand{\atkknn}{\textsc{ATK-KNN}}
\newcommand{\maxis}{\textsc{MAX-IS}}
\newcommand{\maxint}{\textsc{MAX-INT}}
\newcommand{\abs}[1]{\left|#1\right|}
\newcommand{\norm}[1]{\left\|#1\right\|}
\newcommand{\set}[1]{\left\{#1\right\}}
\newcommand{\sucht}{\,\middle|\,}
\newcommand{\eps}{\varepsilon}
\renewcommand{\th}{{}^{\mathrm{th}}}
\newcommand{\paren}[1]{\left(#1\right)}
\newcommand{\trainingset}{\trset}
\newcommand{\calS}{\mathcal{S}}
\newcommand{\ir}{\mathrm{IR}}
\DeclareMathOperator{\nbr}{nbr}
\DeclareMathOperator{\lbl}{lbl}
\DeclareMathOperator{\plurality}{plurality}
\newcommand{\opt}{\mathrm{opt}}
\newcommand{\cov}{\mathrm{cov}}
\newcommand{\atkopt}{\atkrset_\opt}
\newcommand{\atkcov}{\atkrset_\cov}
\newcommand{\dball}{$d$-ball}
\newcommand{\ball}{B}
\newcommand{\balls}{\mathcal{B}}
\newcommand{\dimn}{d}             %
\newcommand{\basis}{\mathbf{e}}   %
\newcommand{\pt}{\mathbf{p}}      %
\newcommand{\x}{\mathbf{x}}       %
\newcommand{\bfc}{\mathbf{c}}
\newcommand{\R}{\mathbf{R}}       %
\newcommand{\Rd}{\R^\dimn}
\newcommand{\radius}{r}
\newcommand{\graph}{G}
\newcommand{\vertices}{V}
\newcommand{\edges}{E}
\renewcommand{\th}{{}^{\mathrm{th}}}
\title{Training-Time Attacks against \(k\)-Nearest Neighbors}
\author{Ara Vartanian$^1$ \and Will Rosenbaum$^2$ \and Scott Alfeld$^2$}
\date{%
  $^1$\texttt{aravart@cs.wsic.edu} \\
  University of Wisconsin--Madison\\%
  $^2$\texttt{\{wrosenbaum, salfeld\}@amherst.edu}\\
  Amherst College\\[2ex]%
    \today
}
\begin{document}

\maketitle

\begin{abstract}
  
Nearest neighbor-based methods are commonly used for classification tasks and as subroutines of other data-analysis methods. 
An attacker with the capability of inserting their own data points into the training set can manipulate the inferred nearest neighbor structure.
We distill this goal to the task of performing a training-set data insertion attack against \(k\)-Nearest Neighbor classification (\(k\)NN).
We prove that computing an optimal training-time (a.k.a.~poisoning) attack against \(k\)NN classification is NP-Hard, even when \(k = 1\) and the attacker can insert only a single data point.
We provide an anytime algorithm to perform such an attack, and a greedy algorithm for general \(k\) and attacker budget.
We provide theoretical bounds and empirically demonstrate the effectiveness and practicality of our methods on synthetic and real-world datasets.
Empirically, we find that \(k\)NN is vulnerable in practice and that dimensionality reduction is an effective defense.
We conclude with a discussion of open problems illuminated by our analysis.

\end{abstract}

\section{Introduction}
\label{sec:intro}

Using machine learning (ML) and automated data analysis methods in practice introduces security vulnerabilities.
An attacker can assert their limited influence over the input data to manipulate the output of the learning system.
\ifsqueeze \color{black}
Here, we consider training-time %
 (a.k.a.~poisoning) attacks against nearest-neighbor based classification, where the attacker perturbs data prior to learning.
\color{black} \else \color{black}
Attacks can be categorized based on when in the process the attacker asserts their influence.
In training-time (a.k.a.~poisoning) attacks, the attacker perturbs data prior to learning.
In deployment-time (a.k.a.~evasion) attacks, the attacker instead perturbs test instances.
Here, we consider training-time attacks against nearest-neighbor based classification.
\color{black} \fi

\(k\)-Nearest Neighbors (\(k\)NN) is a classical non-parametric ML algorithm,
where 
a data point label is chosen by selecting the plurality class of its \(k\)
closest neighbors in the training set.
Outside of classification, many algorithms construct a nearest neighbor graph (connecting each data point to its closest neighbors) as a subroutine.
Examples include ISOMAP~(\citet{isomap}), persistent homology~(\citet{zhu2013persistent,zhu2016stochastic}), and various spectral clustering algorithms~(\citet{spectralclustering}).

We study neighbor-based classification as the canonical process of constructing nearest-neighbor graphs.
An attacker which can affect classifications can similarly alter the nearest-neighbor graph.
\ifsqueeze \color{black}
\color{black} \else \color{black}
For example, consider a facial-recognition system which, given a photograph, reports the \(k\) most similar photographs in the database.
An agent might insert manufactured photographs into the facial-recognition system's database with the intent of having images of her classified as someone else.
\color{black} \fi
We lay the foundation for a theoretical understanding of training-time attacks against \(k\)NN.
The primary contributions are:
\begin{enumerate}
\item We prove that computing an optimal training-time attack against \(k\)-Nearest Neighbor classification is NP-Hard.

\item We present a dynamic programming anytime algorithm that, if run to completion, computes an optimal single-point attack against $k$NN. Using this procedure as a subroutine, we employ and provide bounds for a greedy approach to efficiently construct an effective attack of any size against $k$NN.

\item%
 We perform an empirical investigation on a method of defense against our attack.
For example, reducing the number of dimensions via PCA in the MNIST dataset decreased the attacker's average effectiveness by as much as 26\% while increasing prediction error by less than 1\%.
\end{enumerate}

In Section~\ref{sec:setup}, after defining the learner and threat models for attacking \(k\)NN, we translate the attacker's task into a geometric problem.
 In Section~\ref{sec:hardness}, we prove that computing the optimal attack is NP-Hard, even for an attacker who targets one class with one additional data point. %
 In Section~\ref{sec:methods}, we present a (worst-case exponential-time) algorithm for finding the optimal single-point attack against $k$NN that uses dynamic programming to find successively better attacks as it proceeds, allowing it to be used as an anytime algorithm.
We then employ a greedy algorithm to construct a data-insertion attack of any size against $k$NN.
We show a bound for our greedy algorithm which is dependent on \(k\) (but independent of the attacker's budget).
We discuss the ``plug-and-play'' nature of our analysis, in that any future improvements to the single-point attacks algorithm automatically yield a better bound for our general attack.
In Section~\ref{sec:exp}, we empirically demonstrate the effectiveness and practicality of our attack on synthetic and real world data.
We then turn our attention to defense, and empirically investigate defense strategies based on dimensionality reduction.
We discuss related work in Section~\ref{sec:relwork} and conclude in Section~\ref{sec:conc} with a discussion of open problems.

\section{Problem Setup}
\label{sec:setup}
We focus on the role of an attacker, inserting training data so as to manipulate what is learned.
\ifsqueeze \color{black}
\color{black} \else \color{black}
In what follows we present the learner and threat models for attacking \(k\)NN classification.
In Section~\ref{sec:geoview} we translate the task of mounting a single-point attack against \(k\)NN to a geometric problem.
This geometric interpretation serves as the basis of our proof of hardness as well as the generic attack we present in the following sections.
\color{black} \fi
\ifsqueeze \color{black}
A summary of notation used in this work is presented in Table~\ref{table:notation} in the appendix.
\color{black} \else \color{black}
A summary of notation used in this work is presented in Table~\ref{table:notation}.
\color{black} \fi

\subsection{Learner Model}

We consider a learning algorithm \(\learner\) performing \(k\)-Nearest Neighbors (\(k\)NN) classification.  Given a training set \(\trset\) of \(|\trset|\) points in \(d\) dimensions and query point \(x\), the learner predicts
\begin{align}
 \learner[x ; \trset] = \text{Plurality label of the } k~\text{points in \(\trset\) closest to \(x\)} \label{eqn:learner_defn} \nonumber
\end{align}
where ``closest'' is defined in terms of an \(\ell_p\) norm for \(p \in (1, \infty)\). %
The specific implementation is not important to the work presented here.
For simplicity of exposition, we ignore cases involving ties (two or more points in \(\trset\) are exactly the same distance from the query point) as well as cases where no strict plurality exists.
Our methods can handle such cases (using e.g., random tie breaking) with minor modifications.

\subsection{Threat Model}

\(k\)NN is a non-parametric learner.
As such, we consider a score function for the attacker based on the number of prescibed points $\learner$ classifies according to the attacker's wishes. This measure of the effectiveness of an attack is in contrast to most previous work on training-time attacks against parametric learners, where the effectiveness is defined in terms of a distance between (learned and target) models~(\citet{vorobeychik2018adversarial}).
We consider an attacker aiming to control the predicted labels of a set of points important to them.
For example they may have a collection of emails that she would like classified as ham.
They accomplish their task by constructing examples and inserting them into the training set.
We formalize this threat model as follows.

The attacker, \atkr, has full knowledge of the training set \(\trset\),the value of \(k\), and \(p\) (the norm used) .
\atkr~has a budget of \(\atkrbudget\) and their capability is to insert a set \(\atkrset\) of \(\atkrbudget\) points to \(\trset\), each with features of their choosing\footnote{
Constraints on which feature values \atkr~ can select, should any exist, can be incorporated into the attack algorithms as discussed in Section~\ref{sec:methods}.}
each with label we denote  as \(y^+\).

In addition, \atkr\ has an \emph{target pool}  \(\evalpool = \{(\x^{\text{ev}}_1, y^+), \ldots, (\x^{\text{ev}}_n, y^+)\}\). The \emph{score} of their attack is the number of points in $\evalpool$ that $\learner$ labels consistently with $\evalpool$:
\begin{equation}\textstyle
  \label{eqn:score_defn}
  \score(\atkrset, \evalpool, \trset, \learner) = \sum_{(\x, y) \in \evalpool} \mathbf{1}_{(\learner[\mathbf x ; \trset \cup \atkrset] = y)}.
\end{equation}
When $\evalpool, \trset$ and $\learner$ are clear from context, we denote this quantity by $\score(\atkrset)$. \atkr's goal is to find an attack $\atkrset$ that maximizes their score, i.e., to find %
 $\arg \max_{\atkrset \in (\Rd)^\atkrbudget} \score(\atkrset, \evalpool, \trset, \learner)$.

\begin{figure}[t]
  \centering
  \includegraphics[width=\defaultfigsize]{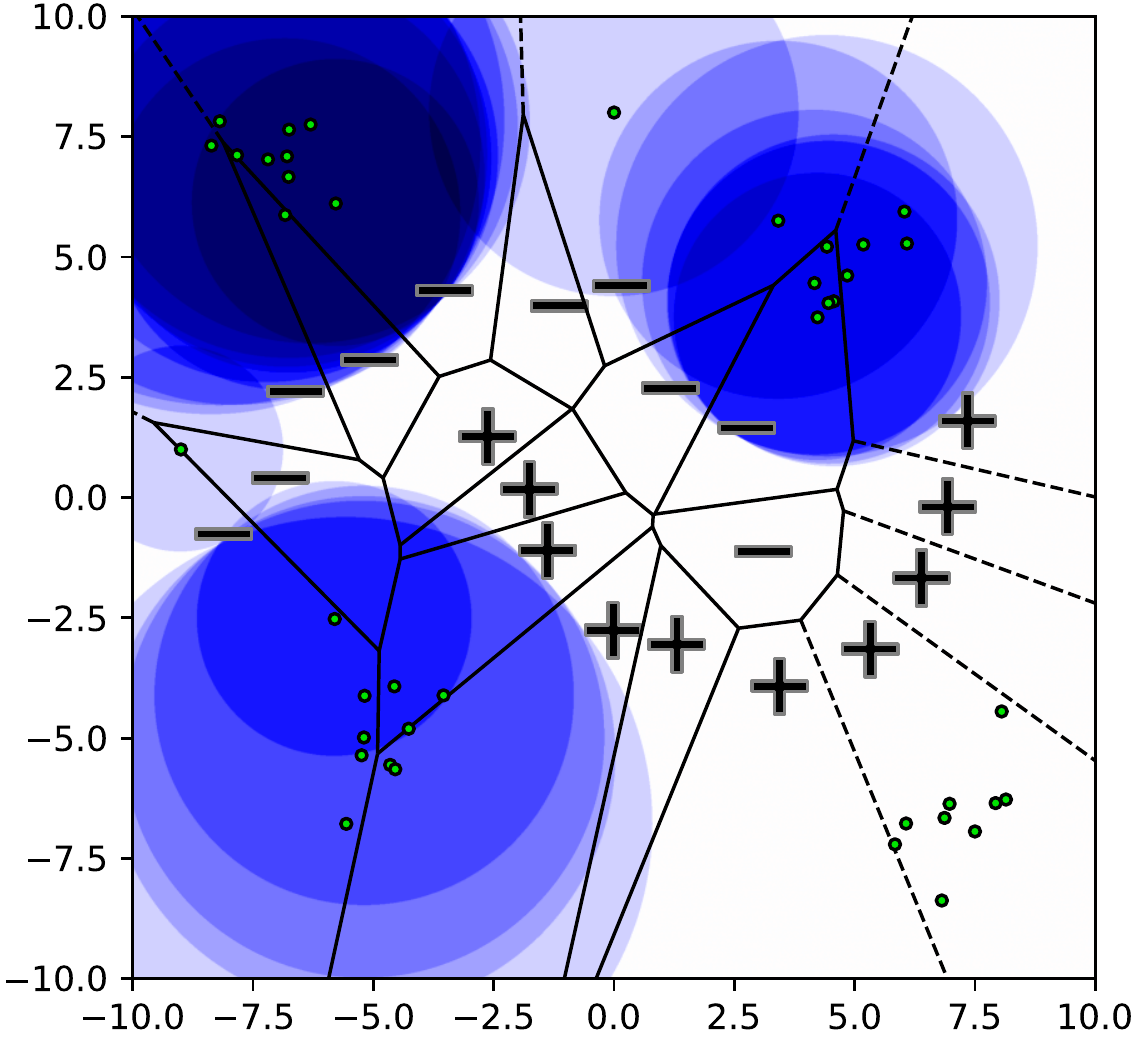}
 \caption{{\bf Synthetic Illustration.}
  An illustration the balls (in this case, disks) induced by \(\trset\) (the training set) and \(\evalpool\) (the target pool) for 1NN in \(d = 2\) dimensions.
  A training set from the well-known construction of two ``moons'' is shown as \(+\)'s and \(-\)'s, together with the decision boundary induced by it.
  We plot an example \(\evalpool\) as green dots.
  The color of each point in the plane indicates the total score increase ($\tsi$) associated with adding a single $+$ element at that point in the plane: darker shades of blue indicate larger $\tsi$ values.
  Each colored region is an intersection of disks centered at the points in $\evalpool$.
  The optimal single-point attack inserts a $+$ in one of the darkest blue regions.
  }
\label{fig:example_circles}
\end{figure}

\subsection{Geometric View}
\label{sec:geoview}

We rephrase the problem of finding an optimal single-point training-time attack against \(k\)NN as the geometric problem of computing the maximum intersection of \(d\)-dimensional balls in $\Rd$.

We define the \emph{influencing region} (IR) of a point \(\evalitem\) in the target pool to be the set of feature vectors \(\x\) such that adding the element \((\x, y^+)\) to the training set with some multiplicity \(\ell\) changes the prediction of \(\evalx\) to \(y+\):
\begin{align}
  \scriptstyle
    \IR(\x, y, \trset) = \left\{\x' ~:~ \exists\ell ~|~ y = \learner\left[\x ; \trset \cup \{(\x', y^+)^\ell\} \right] \neq \learner\left[\x; \trset\right] \right\}
\end{align}
Here, we use \(\{(\x', y^+)^\ell\}\) to denote the multiset containing the element \((\x', y^+)\) with multiplicity \(\ell\). Observe that \(\IR(\x, y^+)\) is empty if \(\learner\left[\x ; \trset\right] = y^+\).

In the case of $k$-Nearest Neighbor classification using $\ell_p$ distance
each \(\IR(\x, y^+, \trset)\) is a (possibly empty) ball\footnote{Recall that a ball $\ball$ with center $\bfc$ and radius $\radius$ is defined as $\ball = \set{\x \in \Rd \sucht \norm{\x - \bfc}_p \leq \radius}$.} centered at \(\x\). Specifically, \(\IR(\x, y^+, \trset)\) can be computed as follows. Let \(\x_1, \x_2, \ldots, \x_k\) be \(\x\)'s \(k\) nearest neighbors in (strictly) increasing order of distance from \(\x\). Let \(j\) be the largest index such that setting the labels of \(\x_j, \x_{j+1}, \ldots, \x_{k}\) all to \(y^+\) changes \(\learner\)'s prediction of \(\x\) to \(y^+\). Then \(\IR(\x, y^+, \trset)\) is the ball centered at \(\x\) with radius \(r = \|\x - \x_j\|_p\).

To see that this this procedure correctly computes \(\IR(\x, y^+, \trset)\), first observe that by adding \((\x', y^+)\) to \(\trset\) with multiplicity \(\ell = k - j + 1\) and \(\|\x - \x'\|_2 < r\), these new points replace \(\x_j, \x_{j+1}, \ldots, \x_k\) among \(\x\)'s \(k\) nearest neighbors. Thus, \(y^+ = \learner\left[\x ; \trset \cup \{(\x', y^+)^\ell\}\right]\). Conversely, if \(\norm{\x - \x'}_p > r\), then adding \((\x', y^+)\) with any multiplicity cannot change the labels of \(\x\)'s \(j\) nearest neighbors. Therefore, by the maximality of \(j\), adding these points does not change \(\learner\)'s prediction of \(\x\).

Given a point \((\x, y^+) \in \evalpool\), we can associate a \emph{cost} and \emph{value} with \(\IR(\x, y^+, \trset)\). The \emph{cost} \(c = c(\x, y^+, \trset)\) is the minimum multiplicity of a point \((\x', y^+)\) with \(\x' \in \IR(\x, y^+, \trset)\) such that adding \((\x', y^+)\) with multiplicity \(c\) changes the classification of \(\x\) to \(y^+\). Note that for $k$NN, we always have \(c \leq \lceil k / 2\rceil\).\footnote{In the case where $y^+ = \learner\left[\x ; \trset\right]$ (hence $\IR(\x, y^+) = \varnothing$), we assign a cost of $c = 0$} The \emph{value} \(v\) is determined by
\begin{equation}
  \label{eqn:value}
v =
\begin{cases}
  1 &\text{if } y^{+} \neq \learner[\x; \trset] \\
  0 &\text{otherwise}.
\end{cases}
\end{equation}
Thus, \(v\) is the change score of an attack upon adding \((\x', y^+)\) to \(\atkrset\) with multiplicity \(c\), while \(\atkr\)'s score is unchanged when adding \((\x', y^+)\) with any multiplicity less than \(c\). Given \(\trset\) and \(\evalpool\), Algorithm~\ref{alg:construct_hyperspheres} constructs a family \(\labeledballs = \{((\x, r), v, c)\}_i\) consisting of the influencing regions of points in \(\evalpool\), together with their associated values and costs.

\begin{algorithm}[b]
  \small
  \DontPrintSemicolon
  \caption{\texttt{ConstructIRs}
   \setcounter{AlgoLine}{0}
  }
  \label{alg:construct_hyperspheres}
  \KwInput{\\
    \(\trset = \{(\x^{\text{tr}}, y^{\text{tr}})\}_i\), \(\evalpool = \{(\x^{\text{ev}}, y^{\text{ev}})\}_i, y^+\)
    }
  \KwOutput{\\
    \(\labeledballs = \{((\mathbf c, r), v, c)\}_i\) }
  \( \labeledballs \gets [~]\)
  
  \For{\((\x, y^{\text{ev}}) \in \evalpool\)}
      {
        \(\nbr \gets [\x_1, \x_2, \ldots, \x_k]\) where \(\x_i\) is \(\x\)'s \(i\th\) nearest neighbor in \(\trset\)
        
        \(\lbl \gets [y_1, y_2, \ldots, y_k]\) where \(y_i\) is \(\x_i\)'s label in \(\trset\)
        
        \(v \gets \) value according to~(\ref{eqn:value})
        
        \(j \gets k\)
        
        \While{\(\plurality(\lbl) \neq y^+\)}{
          \(\lbl[j] \gets y^+\)
          
          \(j \gets j - 1\)
          
        }
        Append \(((\x, \|\x - \x_{j+1}\|_2), v ,k - j + 1)\) to \(\labeledballs\)
      }
      \Return \(\labeledballs\)
\end{algorithm}

We define the \emph{total score increase} (\(\tsi\)) of an attack point \((\x, y^+)\) with multiplicity \(\ell\) to be the increase in \atkr's score if $(\x, y^+)$ is added to $\atkrset$ with multiplicity \(\ell\). $\tsi$ can be computed directly from $\labeledballs$:
\begin{equation}
  \label{eqn:tsi_defn}
  \tsi(\x, y^+, \ell, \labeledballs) =
    \sum_{((\x', r), v, c) \in \labeledballs} v \cdot \mathbf{1}_{\|\x - \x'\|_2 < r} \cdot \mathbf{1}_{\ell \geq c}
\end{equation}

This expression is central to the discussion of both our hardness result and our (greedy) algorithm for computing the full (\(\atkrbudget > 1\)) attack.
Figure~\ref{fig:example_circles} shows an example for \(p=2\) of constructing hyperspheres and the induced $\tsi$ values.

\ifsqueeze \color{black}
\color{black} \else \color{black}
\begin{table}[t]
\small
  \centering
  \begin{tabular}{|c|l|}
    \hline

    Symbol                                      & Meaning\\ \hline \hline
    \(\atkr \)                                  & The attacker\\ \hline
    \(k \)                                      & (Of ``\(k\)NN'') number of neighbors considered\\ \hline
    \(\trset\)                                  & Learner's training set\\ \hline
    \(\evalpool\)                               & \atkr's evaluation pool\\ \hline
    \(d\)                                       & The ambient dimension of the data\\ \hline
    \(n \)                                      & Number of points in \(\evalpool\)\\ \hline
    \(\atkrset\)                              & The set of points \atkr~ adds to \(\trset\) %
    \\ \hline
    \(\atkrbudget \)                            & \atkr's (point) budget \\ \hline
    \(\atkrtimebudget \)                        &\atkr's time budget\\ \hline
    \(\learner \)                               & Learning algorithm\\ \hline %
    \(\learner[x; \trset] \)                    & Learner's prediction on \(x\) given training set \(\trset\)\\ \hline 
    \(\score\)                                  & \atkr's score for a given attack (\ref{eqn:score_defn})\\ \hline
    \tsi                                        & Total Score Increase (see (\ref{eqn:tsi_defn}))\\ \hline
    $\balls$ ($\labeledballs$)                   & A family of (labeled) balls in $\Rd$\\ \hline
    \(\mathbf 1_{\set{\cdot}}\)                      & 0-1 indicator function \\ \hline
  \end{tabular}

  \caption{{\bf Notation Summary.}}

    \label{table:notation}
  \end{table}

\color{black} \fi

\section{Proof of Hardness}
\label{sec:hardness}
We define the problem $\atkknn$ and show that it is NP-hard in general.
An instance of $\atkknn$ consists of a quadruple $(\trset, \evalpool, \atkrbudget, p)$, where $\trset, \evalpool \subseteq \Rd \times \atkrlabelset$, $\atkrbudget$ is a positive integer (\atkr's \emph{point budget}), and $p \geq 1$ is the $\ell_p$ norm used. Without loss of generality, we assume that all $(\x,y) \in \trset$ are labeled the same fixed label $y^-$, and \atkr\ wishes for $\learner$ to classify all $(\x, y^+) \in \evalpool$ as $y^+ \neq y^-$. The desired output is the maximum score achieveable by adding any $\atkrbudget$ element set $\atkrset \subseteq \Rd \times \atkrlabelset$ to $\trset$:

\begin{align}
\nonumber \atkknn(\trset, \evalpool, \atkrbudget, p) = \max_{\atkrset : \abs{\atkrset} = \atkrbudget} \score(\atkrset, \evalpool, \trset, k\text{NN}).
\end{align}

Note that \(\atkknn\) is a special case of attacking \(k\)NN.
Namely, \(k = b = 1\), and all points in the target pool are the same label.
In what follows we prove that \(\atkknn\) is NP-Hard.
In the appendix, we show that the general case (higher \(k, b\), and a heterogeneous target pool) is no easier.

\begin{theorem}
  \label{thm:hardness}
  $\atkknn$ is NP-hard.
\end{theorem}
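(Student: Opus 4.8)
The plan is to leverage the geometric reformulation of Section~\ref{sec:geoview}. Consider $\atkknn$ with point budget $\atkrbudget = 1$ and $k = 1$ --- it is enough to prove hardness for this case. Since every $(\evalx, y^+) \in \evalpool$ carries the label $y^+$ and $\trset$ is labeled $y^-$, the reasoning of Section~\ref{sec:geoview} gives that each influencing region $\IR(\evalx, y^+, \trset)$ is a ball $\ball_i$ centered at $\evalx$ with radius equal to the distance from $\evalx$ to its nearest point in $\trset$, and that $\ball_i$ has cost $1$ and value $1$. Hence, by~(\ref{eqn:tsi_defn}), inserting a single point $\x'$ yields score exactly $\abs{\set{i : \x' \in \ball_i}}$, so
\[
  \atkknn(\trset,\evalpool,1,p) \;=\; \max_{\x' \in \Rd}\,\abs{\set{i : \x' \in \ball_i}},
\]
the largest number of balls of the arrangement $\set{\ball_i}$ sharing a common point. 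Call this geometric problem $\maxint$. To prove the theorem it suffices to (i) reduce any $\maxint$ instance to an $\atkknn$ instance with the same optimum, and (ii) prove $\maxint$ is NP-hard.

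For (i) it is convenient to state $\maxint$ for families of \emph{congruent} balls --- equivalently, covering as many of $n$ given points as possible by a single ball of prescribed radius --- which is harmless since step (ii) already produces congruent balls. Given congruent balls $\ball_i = \ball(\bfc_i,\radius)$ in $\R^{d'}$ with distinct centers, put $\evalpool = \set{(\bfc_i, y^+)}_i$, work in $d = d'+n$ dimensions, and for each $i$ add to $\trset$ a single point $t_i := \bfc_i + \radius\,\basis_{d'+i}$ labeled $y^-$. Then $\norm{t_i - \bfc_i}_p = \radius$, whereas $\norm{t_j - \bfc_i}_p = \paren{\norm{\bfc_i - \bfc_j}_p^p + \radius^p}^{1/p} > \radius$ for $j \neq i$; hence the nearest neighbor of $\bfc_i$ in $\trset$ is $t_i$ at distance exactly $\radius$, and by the characterization of $\IR$ we get $\IR(\bfc_i, y^+, \trset) = \ball_i$. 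A single attack point therefore scores precisely its depth in $\set{\ball_i}$, and the residual boundary/tie issues are dispatched by an arbitrarily small perturbation of $\radius$.

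For (ii) I would reduce from maximum independent set, $\maxis$. Given a graph $H$ on $[N]$, the goal is to build congruent balls $\set{\ball_v}_{v\in[N]}$, in dimension polynomial in $N$, whose nerve is exactly the independence complex of $H$: $\bigcap_{v\in S}\ball_v \neq \varnothing$ iff $S$ is an independent set. It suffices to force $\ball_u \cap \ball_v = \varnothing$ on every edge $uv$ and to ensure that \emph{every} independent set has a common point, since emptiness of the intersection is upward closed; the maximum depth of the arrangement is then $\alpha(H)$. Separating the two endpoints of an edge is easy --- push $\bfc_u$ and $\bfc_v$ apart along a coordinate private to that edge. The hard part, which I expect to be the main obstacle, is guaranteeing that even a \emph{maximum} independent set $S$ admits a common point, i.e.\ that the minimum enclosing ball of $\set{\bfc_v : v\in S}$ has radius at most $\radius$: this forces the dimension to grow with $N$ (in fixed dimension the ball arrangement has polynomial complexity and $\maxint$ is easy) and requires a delicate metric placement, because the coordinates introduced to separate edge endpoints also inflate the minimum enclosing ball of large independent sets, so the centers and the common radius must be chosen to decouple these two effects. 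Once this gadget is in hand, composing it with (i) proves Theorem~\ref{thm:hardness}; reducing the general problem (arbitrary $k$ and $\atkrbudget$, heterogeneous $\evalpool$) to this special case is handled in the appendix.
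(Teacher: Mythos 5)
Your high-level architecture matches the paper's: reduce $\maxis$ to a maximum ball-intersection problem ($\maxint$), then realize those balls as influencing regions of an $\atkknn$ instance with $k = \atkrbudget = 1$. Your part (i) is a legitimate (and different) realization gadget --- adding one fresh coordinate per ball and placing a single $y^-$ training point at $\bfc_i + \radius\,\basis_{d'+i}$ --- whereas the paper stays in the original dimension and places training points at $2\radius\basis_i$ and $-(2\radius - \eps)(\basis_i + \basis_j)$. That part is fine.

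The genuine gap is in part (ii), which is the entire technical content of the theorem, and you have left it unconstructed: you explicitly defer ``the main obstacle'' of producing congruent balls whose nerve is the independence complex of $H$ (disjoint on edges, mutually intersecting on every independent set). That tension --- adjacent centers at distance $> 2\radius$ while every independent set has circumradius $\leq \radius$ --- is exactly the delicate Jung-type balancing act that makes this route hard, and it is essentially the clique-based construction of Crama et al.\ cited in the paper, which is only known to go through for $p = 2$. The paper avoids this entirely with a different gadget: the vertex balls $\ball_i$ (center $\radius\basis_i$, radius $\radius$) are \emph{not} made disjoint across edges; instead, for each edge $ij$ an auxiliary ball $\ball_{ij}$ (center $-\radius(\basis_i + \basis_j)$, radius $\radius\sqrt[p]{2 - \eps}$) is added with multiplicity $\dimn$, chosen so that $\ball_i, \ball_j, \ball_{ij}$ intersect pairwise but $\ball_i \cap \ball_j \cap \ball_{ij} = \varnothing$ (via a sign argument on coordinates plus Jensen's inequality, valid for all $\ell_p$ with $p \in (1,\infty)$), while any independent family $\set{\ball_i}_{i \in I}$ together with \emph{all} edge balls has a common point near the origin. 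The high multiplicity forces any deep point into every $\ball_{ij}$, which in turn forces the vertex balls containing it to index an independent set. Until you actually exhibit your gadget (and verify it for general $\ell_p$, not just $p=2$), your argument does not establish the theorem.
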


When $\atkrbudget = \abs{\atkrset} = 1$, $\atkknn$ can be reduced to the problem of finding the maximum intersection of a family $\balls$ of \dball s in $\Rd$ using Algorithm~\ref{alg:construct_hyperspheres}. For $b > 1$ and $k = 1$, $\atkknn$ reduces to finding a set of $\atkrbudget$ points in $\Rd$ intersecting the maximum number of \dball s $\ball \in \balls$. %
We refer to the latter problem as the \emph{maximum intersection problem}, or $\maxint$. We show that even for $\atkrbudget = 1$, $\maxint$ is NP-hard. We then show that the instances of $\maxint$ constructed in our reduction can be realized as instances of $\atkknn$, thereby establishing Theorem~\ref{thm:hardness} for $k = 1$. We then give a simple modification of our construction that establishes the result for general $k$ and $\atkrbudget$ as well. We sketch the argument here; details appear in the appendix.

A $\dimn$-dimensional instance of $\maxint$ consists of a set $\balls = \set{\ball_1, \ball_2, \ldots, \ball_\ell}$, where each $\ball_i$ has a rational center and rational squared radius. The goal is to find the maximum number of mutually intersecting \dball s in $\balls$, i.e., to find $\max \big\{\abs{I} \,|\, \bigcap_{i \in I} \ball_i \neq \varnothing\big\}$.

We prove that $\maxint$ is NP-hard via a reduction from the maximum independent set ($\maxis$) problem. Recall that given a graph $\graph = (\vertices, \edges)$, an \emph{independent set} $I \subseteq \vertices$ is a set of vertices such that no pair of vertices in $I$ share an edge. $\maxis$ is to find the maximum cardinality among all independent sets in $G$. $\maxis$ is known to be NP-complete~\cite{Garey1979computers}.     %

\begin{theorem}
  \label{thm:max-int}
  $\maxint$ is NP-hard.
\end{theorem}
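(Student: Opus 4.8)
The plan is to reduce from $\maxis$, which is NP-complete. Given a graph $\graph = (\vertices, \edges)$ with $\vertices = \set{1, \dots, n}$, I will construct in $\R^n$ (so $\dimn = n$) a family of $n + \abs{\edges}$ balls whose maximum mutual intersection equals $\alpha(\graph) + \abs{\edges}$, where $\alpha(\graph)$ is the independence number; since $\abs{\edges}$ is given, the optimum of the resulting $\maxint$ instance determines $\alpha(\graph)$, and the construction is plainly polynomial time with rational data, so NP-hardness of (the decision version of) $\maxint$ follows.

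The heart is a gadget in which each ball is an \emph{honest} relaxation of a linear constraint. Fix an integer $M$ that is polynomially large in $n$ (e.g.\ $M = 4n+1$). For each vertex $i$ let $\ball_i$ have center $(M+1)\basis_i$ and squared radius $M^2$; expanding $\norm{\x - (M+1)\basis_i}^2 \le M^2$ yields $(x_i - M - 1)^2 \le M^2$, hence $\ball_i \subseteq \set{\x : x_i \ge 1}$ (and $\ball_i$ fills out more and more of this half-space as $M \to \infty$). For each edge $e = (i,j)$ let $\ball_e$ have center $(\tfrac12 - M)(\basis_i + \basis_j)$ and squared radius $2M^2$; since $u^2 + v^2 \le 2M^2$ forces $u + v \le 2M$, applying this with $u = x_i - \tfrac12 + M$ and $v = x_j - \tfrac12 + M$ shows membership $\x \in \ball_e$ forces $x_i + x_j \le 1$, i.e.\ $\ball_e \subseteq \set{\x : x_i + x_j \le 1}$. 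Thus $\ball_i$ encodes ``$i$ is selected'' and $\ball_e$ encodes ``$e$ forbids selecting both of its endpoints,'' and all centers and squared radii are rational.

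For the upper bound I would show that \emph{every} point of $\R^n$ lies in at most $\alpha(\graph) + \abs{\edges}$ balls. Fix $\x$ and set $T = \set{i : \x \in \ball_i}$, $F = \set{e : \x \in \ball_e}$. By the containments, $x_i \ge 1$ for $i \in T$ and $x_i + x_j \le 1$ for $(i,j) \in F$, so no edge of $F$ can have both endpoints in $T$; hence $\abs{F} \le \abs{\edges} - \abs{\edges(\graph[T])}$. The proof then finishes with the elementary fact that $\abs{S} - \abs{\edges(\graph[S])} \le \alpha(\graph)$ for every $S \subseteq \vertices$: taking a maximum independent set $J$ of $\graph[S]$, maximality forces each vertex of $S \setminus J$ to contribute a distinct edge into $J$, so $\abs{\edges(\graph[S])} \ge \abs{S} - \alpha(\graph[S]) \ge \abs{S} - \alpha(\graph)$. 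Combining, $\abs{T} + \abs{F} \le \abs{\edges} + (\abs{T} - \abs{\edges(\graph[T])}) \le \alpha(\graph) + \abs{\edges}$.

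For the matching lower bound, take a maximum independent set $I^\ast$ and the point $\x^\ast$ with $x^\ast_i = \tfrac32$ for $i \in I^\ast$ and $x^\ast_i = -1$ otherwise. A direct calculation — whose only nontrivial inputs are $\sum_j (x^\ast_j)^2 \le 4n$ and, for every edge, $x^\ast_i + x^\ast_j - 1 \le -\tfrac12$ because $I^\ast$ is independent — shows $\x^\ast \in \ball_i$ for all $i \in I^\ast$ and $\x^\ast \in \ball_e$ for all $e \in \edges$ as soon as $M \ge 4n+1$; hence $\set{\ball_i : i \in I^\ast} \cup \set{\ball_e : e \in \edges}$ is a mutually intersecting family of size $\alpha(\graph) + \abs{\edges}$. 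The step demanding the most care is the gadget itself: a common intersection of balls is strictly stronger than pairwise intersection (one cannot simply realize the complement graph as a ball-intersection graph), so the balls must constrain a candidate point \emph{globally}; choosing each ball to lie inside its half-space for \emph{all} of $\R^n$ is exactly what removes any ``bounded region'' caveat from the upper bound, and the only residual bookkeeping is to make $M$ polynomially large enough that the witness $\x^\ast$ lands inside the balls and not merely inside the half-spaces, while keeping every number rational.
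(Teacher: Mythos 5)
Your reduction is correct, and while it shares the paper's overall blueprint (reduce from $\maxis$; one ball per vertex along a positive coordinate axis, one ball per edge in the opposite orthant, arranged so that $\ball_i, \ball_j, \ball_{ij}$ never mutually intersect for $ij \in \edges$), it diverges in one substantive way: the paper takes each edge ball with multiplicity $\dimn$ so that any candidate maximum-intersection point can be assumed to lie in \emph{all} edge balls, and only then reads off an independent set; you avoid multiplicities entirely by proving the sharper counting bound $\abs{T} + \abs{F} \leq \abs{\edges} + \alpha(\graph)$ for \emph{every} point, via the elementary lemma $\abs{S} - \abs{\edges(\graph[S])} \leq \alpha(\graph)$. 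That lemma (each vertex of $S$ outside a maximal independent set of $\graph[S]$ contributes a distinct edge) is correct, and it buys you a plain set of $n + \abs{\edges}$ balls rather than a multiset, plus half-space containments $\ball_i \subseteq \set{x_i \geq 1}$ and $\ball_e \subseteq \set{x_i + x_j \leq 1}$ that hold globally, so no ``small $\eps$ / sufficiently small $t$'' analysis is needed on the upper-bound side; your witness point with coordinates $3/2$ and $-1$ also checks out for $M = 4n+1$. The one thing your argument gives up relative to the paper's is generality in the norm: your containments are derived by expanding $\norm{\cdot}_2^2$ and using $u^2 + v^2 \leq 2M^2 \Rightarrow u + v \leq 2M$, so the proof as written establishes NP-hardness only for Euclidean balls, whereas the paper's construction (and its Lemmas on coordinate signs and Taylor-expansion feasibility) is carried out for all $\ell_p$ norms with $p \in (1,\infty)$, which the paper needs downstream to make Theorem~\ref{thm:hardness} cover every such $p$. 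If you only care about $p = 2$ your proof is a clean, arguably simpler alternative; to match the paper's scope you would need to redo the two half-space containments and the witness calculation with $p$-th powers (e.g.\ via Jensen, as the paper does).
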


Towards proving Theorem~\ref{thm:max-int}, let $\graph = (\vertices, \edges)$ be a graph on $\dimn$ vertices. We denote $\vertices = \set{1, 2, \ldots, \dimn}$ and edges in $\edges$ by~$ij$. The idea of the reduction is to associate a ball $\ball_i \subseteq \Rd$ with each vertex $i \in \vertices$ and a ball $\ball_{ij}$ with each edge $ij \in \edges$. Each triple of the form $\ball_i$, $\ball_j$, $\ball_{ij}$ intersect pairwise, but the three do not mutally intersect (see Figure~\ref{fig:hardness}). The interpretation is that choosing a point $\x \in \ball_i$ corresponds to $i$ being in an independent set $I$, while $\x \in \ball_{ij}$ corresponds to the edge $ij$ being satisfied (i.e., $i$ and $j$ are not both in $I$). In our construction, a point $\x$ is contained in the largest number of balls in $\balls = \set{\ball_i} \cup \set{\ball_{ij}}$ if and only if the set of vertices $I = \set{i \sucht \x \in \ball_i}$ is a maximum independent set.

\begin{figure}[t]
  \centering
\ifsqueeze \color{black}
\includegraphics[width=0.5 \columnwidth]{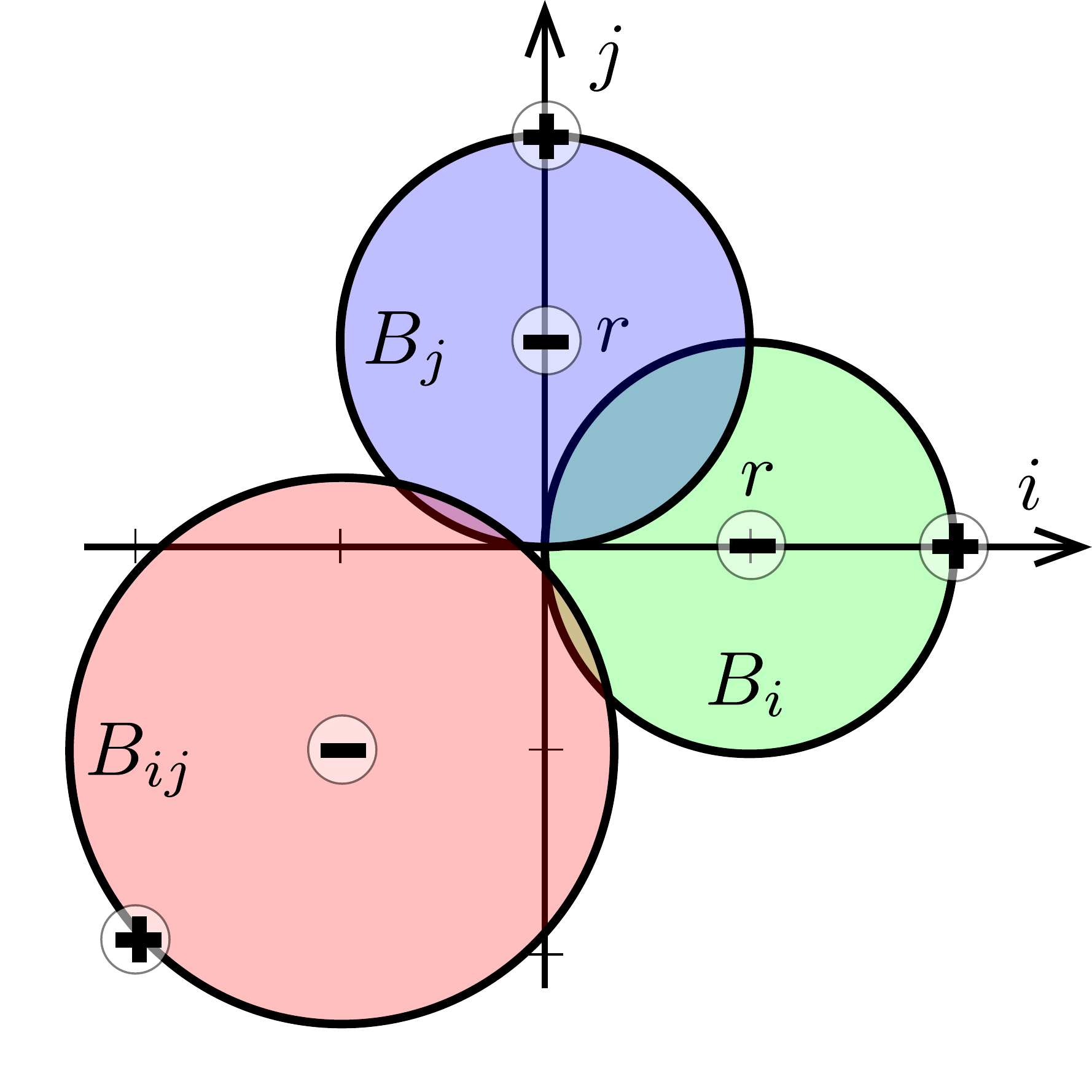}
\color{black} \else \color{black}
 \includegraphics[width=0.65 \columnwidth]{disks.pdf}
 \color{black} \fi
 \caption{The configuration $\balls$ for 2 dimensions. The disks intersect pair-wise, but not mutually. The instance of $\atkknn$ where points in $\trainingset$ are indicated with ``$+$'' signs, and points in $\evalpool$ are indicated with ``$-$'' signs realizes $\balls$.}
  \label{fig:hardness}
\end{figure}

Formally, we define the reduction $\varphi$ from $\maxis$ to $\maxint$ as follows. Given a graph $\graph$, we define a collection of balls $\varphi(G) = \balls = \set{\ball_i \sucht i \in \vertices} \cup \set{\ball_{ij} \sucht ij \in E}$ in $\Rd$, where $\ball_i$ has center $\radius \basis_i$ and radius $\radius$, while $\ball_{ij}$ has center $- \radius (\basis_i + \basis_j)$ and radius $\radius \sqrt[p]{2 - \eps}$. Here $\basis_i$ denotes the $i\th$ standard basis vector in $\Rd$. Values $\radius$ and $\eps$ are positive numbers whose values will be determined later. That is,
\begin{talign}
  \ball_i &= \big\{\x \in \Rd \big| \abs{x_i - \radius}^p + \sum_{k \neq i} \abs{x_k}^p \leq r^p\big\},\label{eqn:balli}\\
  \ball_{ij} &= \big\{\x \in \Rd \,\big|\, \abs{x_i + \radius}^p + \abs{x_j + \radius}^p + \sum_{k \neq i, j} \abs{x_k}^p \leq 2 \abs{r - \eps}^p\big\} \label{eqn:ballij}.
\end{talign}
The configuration $\balls$ is depicted in Figure~\ref{fig:hardness} for $\dimn = p =  2$. We view $\balls$ as a multiset where each $\ball_i$ occurs with multiplicity~1, and each $\ball_{ij}$ occurs with multiplicity~$\dimn$.

For any graph $\graph$, $\balls = \varphi(\graph)$ can be computed in polynomial time.
The $\graph$ has an independent set $I$ of size $M$ if and only if $\balls$ has a maximum intersection of size $\abs{\vertices} \abs{\edges} + M$ (see appendix).
Thus, $\varphi$ is a polynomial-time reduction from $\maxis$ to $\maxint$, and Theorem~\ref{thm:max-int} follows.

Given our proof of Theorem~\ref{thm:max-int}, Theorem~\ref{thm:hardness} follows by constructing instances of $\atkknn$ realizing each instance of $\maxint$ constructed in the proof of Theorem~\ref{thm:max-int}. To this end, we note that choosing
\begin{talign}
\evalpool &= \set{(\radius \basis_i, y^-) \sucht i \in \vertices} \cup \set{(-\radius \basis_i - \radius \basis_j, y^-) \sucht ij \in \edges}\\
\trainingset &= \set{(2 \radius \basis_i, y^+) \sucht i \in \vertices} \cup \set{(- (2 \radius - \eps) (\basis_i + \basis_j), y^+) \sucht ij \in \edges}
\end{talign}
suffices (see Figure~\ref{fig:hardness}).
The proof appears in the appendix.

\section{Proposed Methods}
\label{sec:methods}
Having proven the hardness of \atkr's task, we now present two algorithms for attacking $k$NN. The first algorithm finds a one-point attack that maximizes the \emph{coverage} of the attack point---i.e., the number of influencing regions in which it is contained. The second attack greedily performs one-point attacks to construct a full attack. Since finding the optimal single point to add to \(\trset\) is NP-Hard, we compute a one-point attack with an anytime algorithm that finds successively better solutions over time.

Our algorithms use Algorithm~\ref{alg:construct_hyperspheres} as a preprocessing step to convert a training set and target pool to a labeled collection of balls.\footnote{A simple optimization is to remove from \(\labeledballs\) all balls with radius 0 and any for which \(y^{\texttt{orig}}_i = y\) when \(\atkrlabelset = \{y\}\). For ease of notation we ignore this optimization, maintaining \(|\mathbf \labeledballs| = |\evalpool|\).} The problem of mounting an optimal attack in which a single point is added to \(\atkrset\) with multiplicity \(\ell\) is completely determined by \(\labeledballs\) via Equation~(\ref{eqn:tsi_defn}).

\subsection{Constructing a One-Point Attack}

We present an anytime algorithm which \textbf Constructs a \textbf Hypergraph for  \textbf One-\textbf Point  \textbf Poisoning  \textbf Attacks (\chg).
Consider the hypergraph \(\hypergraph = (V, E)\) where each vertex corresponds to one of the influencing regions and the hyper-edge \(\mathbf e = \{v_i, \ldots, v_j\}\) exists if the intersection \(v_i \cap \cdots \cap v_j\) is non-empty. Each hyper-edge represents a set of target points which can be influenced %
by a single training point with some multiplicity, \(\ell\). We say that a point \(\x'\) \emph{covers} \(\mathbf{e}\) if \(\x'\) is contained in the intersection of influencing regions in \(\mathbf e\). We denote \(\coverage(\x)\) the size of the maximal hyper-edge covered by \(\x\). While \(\x\) lies in the intersection of \(\coverage(\x)\) influencing regions, adding \((\x, y^+)\) to \(\trset\) will only increase \(\tsi\) by \(\coverage(\x)\) if the point is added with sufficient multiplicity---i.e., the the maximum cost of any labeled ball in \(\mathbf e\). Given a maximum allowable multiplicity \(\ell\), \chg\ returns both an attack point \(\x\) that covers some hyper-edge \(\mathbf e\) and the minimum multiplicity \(\ell_\x \leq \ell\) needed to change the classification of points in \(\mathbf e\).

An optimal \emph{coverage} single-point attack places a new training point in intersection with the largest \(\coverage\). In the case of attacking $1$NN, an optimal coverage single point attack also maximizes the \(\tsi\) of any single-point attack. For \(k > 1\) adding any single point (with multiplicity \(1\)) may have \(\tsi = 0\), though adding the point with multiplicity \(\ell = k' = \lceil k / 2 \rceil\) sufficies to achieve \(\coverage(\x) = \tsi(\x, y)\). Thus, in general, we must have \(\ell \geq k'\) in order to guarantee that a nontrivial (single point) attack exists.

 \begin{algorithm}[t]
\small
   \DontPrintSemicolon
   \caption{\chg}
   \setcounter{AlgoLine}{0}
   \label{alg:chg}
   \SetKwFunction{FhasOverlap}{hasOverlap}
   \SetKwProg{Fn}{Function}{:}{}
   \KwInput{\\
     \(\trset = \{(\x^{\text{tr}}, y^{\text{tr}})\}_i\),  \(\evalpool = \{(\x^{\text{ev}}, y^{\text{ev}})\}_i\), \(y^+\)\\
     \(\atkrtimebudget, \ell\) \hfill Time budget,  Point budget\\
   }
   
   \KwOutput{\\
     \((\x, \ell_{\x})\)  \hfill Attack point and associated cost}

   \(\mathbf{H} \gets \texttt{ConstructIRs}(\trset, \evalpool, y^+)\)
   
   \(\texttt{edges} \gets [\emptyset]\)
   
   \(\texttt{done} \gets \texttt{False}\)
   
   \(\texttt{best} \gets \bot, \ell_{\texttt{best}} \gets 0\)
   
   \While{not \texttt{done} \(\wedge\) \(\atkrtimebudget\) not exhausted}
         {
           \(\texttt{newEdges} \gets \emptyset\)
           
           \For{\(\potentialedge \in \{e \cup \{h\} ~|~ e \in \texttt{edges}, h \in \mathbf H\)\}}
                     {%
                       \If {all \((|\potentialedge|-1)\)-sized subsets of \(\potentialedge\) are in \texttt{edges}\label{line:necessarycondition}\\                        
                         \(\wedge ~\text{There exists a point of mutual overlap}\)\label{line:shortcircuit}
                       }
                           {
                             \(\texttt{newEdges} \gets \texttt{newEdges} \cup \{\potentialedge\}\)
                             
                             \(\mathbf x \gets \)\text{~any point in intersection}
                             
                             \If{\(\tsi(\x, y^+, \ell, \mathbf H) > \tsi(\texttt{best}, y^+, \ell, \mathbf H)\)} 
                                {
                                  \(\texttt{best} \gets \x\)
                                  
                                  \(\ell_{\texttt{best}} \gets \max\set{c \sucht (B, v, c) \in \potentialedge}\)
                                }
                           }
                     }
                     \(\texttt{edges} \gets \texttt{newEdges}\)
                     
               \If{\(\texttt{edges} = \emptyset\)}
                  {\(\texttt{done} \gets \text{True}\)}
                  
         }
         \Return \((\texttt{best}, \ell_{\texttt{best}})\)
         
 \end{algorithm}

To compute $\hypergraph$, we utilize the following observation:
If an edge \(\mathbf e\) of size \(m\) is not an element of \(\edges\), then neither is any superset of \(\mathbf e\).
We use a dynamic-programming approach which first considers edges of size \(m = 1\), then \(m = 2\), etc.
The run time is highly dependent on the structure of \(\graph\).
When all balls are identical there are an exponential number of edges to check, whereas if no two balls intersect, the algorithm terminates before checking for any size 3 sets.
By keeping track of the best point (a point with maximal \tsi) as the algorithm considers larger cardinality collections of balls we allow for early termination given a time budget, making our solution deployable as an anytime algorithm.

\subsection{Constructing a Full Attack}

For the full attack, \atkr\ has a initial budget of \(\atkrbudget\) points to add to \(\atkrset\). We adopt the following greedy strategy: invoke \(\chg\) to find an optimal single-point attack with multiplicity \(\ell\) set to the minimum of \(\atkr\)'s remaining budget and \(k' = \lceil k / 2 \rceil\). Then add the point \(\x\) returned by \(\chg\) to \(\atkrset\) with multiplicity \(\ell_\x = \min \set{c(\x') \sucht \x \in \IR(\x', y, \trset \cup \atkrset)}\), and deduct \(\ell_\x\) from the remaining budget. As \(\chg\) is an anytime algorithm, we split a total time budget \(\atkrtimebudget\) evenly across the \(\atkrbudget\) calls.\footnote{Examining more sophisticated scheduling is left as future work.} We call this method \textbf Greedy \textbf Identification of a \textbf Training-\textbf Time \textbf Attack via \chg~(\greedyatk). It is a greedy anytime algorithm for the general \(\atkrbudget > 1\) attack.
The pseudocode is shown in Algorithm~\ref{alg:full}.
 \begin{algorithm}[t]
\small
   \DontPrintSemicolon
   \caption{\greedyatk}
   \setcounter{AlgoLine}{0}
   \label{alg:full}
   \KwInput{\\
     \(\trset = \{(x^{\text{tr}}, y^{\text{tr}})\}_i\),  \(\evalpool = \{(x^{\text{ev}}, y^{\text{ev}})\}_i\), \(y^+\)
     
     \(\atkrtimebudget, \atkrbudget\) \hfill Time budget, Point budget\\
   }
   \KwOutput{\\
     \(\atkrset = \{((\x_1, y_1), \ell_1), \ldots, ((\x_{\atkrbudget}, y_m), \ell_m)\}\) \hfill The attack
   }
   \(\atkrset \gets [~], \atkrbudget_{\text{rem}} \gets \atkrbudget\)
      
   \While{\(\atkrbudget_{\text{rem}} > 0\)}
       {
         \(((\x, y), \ell)  \gets \chg\left(\trset \cup \atkrset, \evalpool, y^+, \frac \atkrtimebudget {\atkrbudget}, \atkrbudget_{\text{rem}}\right)\)
         
         if \(\perp\) returned, then \textbf{break}
         
         \(\atkrset \gets \atkrset \cup \{(\mathbf x, y^+)\}\)
         
         \(\atkrbudget_{\text{rem}} \gets \atkrbudget_{\text{rem}} - \ell\)
       }
       \Return \(\atkrset\)

 \end{algorithm}

Since \chg\ is an anytime algorithm, it may not always return an optimal single-point attack. Nonetheless, the following theorem asserts that if each call to \chg\ runs to completion, and thus returns an optimal single-point attack, then the $\atkrbudget$-point attack found by \greedyatk\ is approximately optimal.
In the appendix, we prove and generalize this result, and describe further practical optimizations.

\begin{theorem}
\label{thm:greedy_bound}
Let \(k' = \lceil k / 2 \rceil\). Suppose each call to \chg\ in \greedyatk\ returns an optimal single-point attack against \(k\)NN. Then the score of the $\atkrset$ returned by \greedyatk\ is a $\frac{1}{k'}(1 - 1 / e)$-fraction of the optimal $\atkrbudget$-point attack's score.
\end{theorem}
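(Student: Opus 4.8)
The plan is to cast this as a classical greedy-submodular maximization argument, with a twist to handle the fact that each greedy step may consume up to $k'$ units of the point budget rather than a single unit. First I would observe that the score function $\score$, viewed as a function of the (multi)set of attack points, is monotone and submodular: monotone because adding more points to $\atkrset$ can only convert more target points to $y^+$ (an influencing region, once ``activated'' by enough points, stays activated), and submodular because the marginal gain of adding a fresh point $(\x, y^+)$ with full multiplicity $k'$ is exactly $\tsi(\x, y^+, k', \labeledballs)$ relative to the \emph{current} training set, and the set of still-flippable influencing regions containing $\x$ only shrinks as $\atkrset$ grows. Here I would lean on the geometric characterization from Section~\ref{sec:geoview}: with $\ell \ge k'$ the cost constraint $\mathbf 1_{\ell \ge c}$ in~(\ref{eqn:tsi_defn}) is always satisfied, so coverage equals $\tsi$, and the problem genuinely reduces to a max-coverage-type problem where $\chg$ run to completion returns a point of maximum marginal gain.

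Next I would set up the comparison with the optimal $\atkrbudget$-point attack $\atkopt$. Let $\objective$ denote its score. The key inequality I want at each greedy step $t$: if the greedy solution so far has score $S_{t}$, then the best single point has marginal gain at least $\frac{1}{\atkrbudget}(\objective - S_t)$ — this follows because $\atkopt$ consists of at most $\atkrbudget$ points whose combined marginal gain over the current greedy set is at least $\objective - S_t$ by submodularity, so by averaging some point in (a $k'$-fold blow-up of) $\atkopt$ has marginal gain at least that average, and $\chg$ finds something at least as good. The twist is budget accounting: each greedy pick costs $\ell_\x \le k'$, so the number of greedy iterations is at least $\atkrbudget / k'$. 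Combining the per-step multiplicative decrease $(\objective - S_{t+1}) \le (1 - 1/\atkrbudget)(\objective - S_t)$ over at least $\lfloor \atkrbudget / k' \rfloor$ steps gives $\objective - S_{\text{final}} \le (1 - 1/\atkrbudget)^{\atkrbudget/k'} \objective \le e^{-1/k'} \objective$, hence $S_{\text{final}} \ge (1 - e^{-1/k'})\objective$. Finally $1 - e^{-1/k'} \ge \frac{1}{k'}(1 - 1/e)$ by concavity of $1 - e^{-x}$ on $[0,1]$ (the chord from $0$ to $1$ lies below the curve), which yields exactly the claimed $\frac{1}{k'}(1 - 1/e)$ bound.

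The main obstacle I anticipate is making the submodularity argument airtight in the presence of \emph{multiplicities}. A greedy step does not add a generic single point; it adds one point location with multiplicity $\ell_\x$, and $\ell_\x$ is chosen adaptively as $\min\{c(\x') : \x \in \IR(\x', y, \trset \cup \atkrset)\}$, which can be strictly less than $k'$. I need to argue that (i) the marginal-gain lower bound still holds when we only pay $\ell_\x$ rather than $k'$ — this is actually favorable, since paying less for the same gain only helps — and (ii) the optimal attack's points, when compared against, can be assumed WLOG to come in multiplicity-$k'$ bundles without loss, so that a clean averaging over ``at most $\atkrbudget$ bundles'' goes through; one must be careful that $\atkopt$ might split its budget into many low-multiplicity points, but since each such point has multiplicity $\ge 1$ there are at most $\atkrbudget$ of them, and replacing each by its $k'$-fold version only increases coverage, so the bound $\objective - S_t \le \atkrbudget \cdot (\text{best marginal gain})$ survives. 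A secondary subtlety is the edge case where the greedy process terminates early (returns $\perp$) because no influencing region with positive value remains reachable — but in that case $S_{\text{final}}$ already equals the maximum achievable score over \emph{all} attacks of any size, so it trivially dominates $\objective$ and the bound holds a fortiori. I would handle both points explicitly but briefly, deferring the fully rigorous multiplicity bookkeeping to the appendix as the theorem statement promises.
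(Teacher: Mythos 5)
Your proposal is correct in substance and reaches the stated bound, but it takes a genuinely different route from the paper's. The paper's appendix proof is a two-stage decomposition through an intermediate quantity: it introduces an optimal \emph{coverage} attack $\atkcov$ on a reduced budget $\atkrbudget' = \lfloor \atkrbudget/k'\rfloor$, pays the factor $k'$ once via sub-additivity of $\coverage$ (partitioning the optimal $\atkrbudget$-point attack into $k'$ groups of size at most $\atkrbudget'$, so $\coverage(\atkopt) \le k'\,\coverage(\atkcov)$), and then invokes Hochbaum--Pathria's $(1-1/e)$ guarantee as a black box on the at-least-$\atkrbudget'$ distinct greedy points, using $\tsi(\atkrset) \le \coverage(\atkrset)$ with equality for the greedy output. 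You instead run the greedy recursion directly against the $k'$-fold blow-up of $\atkopt$: each full iteration closes a $1/\atkrbudget$ fraction of the remaining gap, and the factor $k'$ enters only through the reduced number of iterations, giving $1 - e^{-1/k'}$, which dominates $\tfrac{1}{k'}(1-1/e)$ by concavity of $1-e^{-x}$. Your route thus yields a strictly stronger constant for $k'>1$, at the cost of re-deriving the greedy recursion rather than citing it; the paper's route modularizes more cleanly and extends immediately to the $\beta$-approximate-oracle version proved in the appendix. Both arguments rest on the same two facts, which you correctly identify as the crux: no combination of points outside an influencing region can flip the corresponding target (so the bundled problem is a genuine coverage problem), and blowing $\atkopt$ up to multiplicity $k'$ cannot decrease coverage. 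One caveat: your exponent should really be $\lfloor \atkrbudget/k'\rfloor$ rather than $\atkrbudget/k'$, and $1-(1-1/\atkrbudget)^{\lfloor \atkrbudget/k'\rfloor}$ can dip just below $\tfrac{1}{k'}(1-1/e)$ when $k'\nmid\atkrbudget$ (e.g., $k'=3$, $\atkrbudget=5$); but the paper's partition step suffers the identical rounding defect for the same parameters, so this does not distinguish your argument from the published one in rigor.
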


  In practice, a user can detect when \chg\ has run to completion (hence yielding an optimal result). Theorem~\ref{thm:greedy_bound} can then be used as a post-hoc bound. If each call to \chg\ returns a solution that within a $\beta$ factor of the optimal single-point attack, then the solution returned by \greedyatk\ is a \(\frac{1}{k'}(1 - 1 / e^\beta)\)-factor approximation to the optimal \(\atkrbudget\)-point attack. Details appear in the appendix.

We provide a full proof of Theorem~\ref{thm:greedy_bound} in the appendix, and provide a high level overview of the argument here. First, consider the case \(k = 1\). The problem of mounting an optimal \(\atkrbudget\)-point attack can then be reduced to the \emph{maximum coverage problem} (MCP) defined by~\cite{Hochbaum1998analysis}. An instance of the \(\atkrbudget\)-MCP consists of (1) a universal set \(U\) of elements where each \(x \in U\) has an associated weight \(w(x)\), and (2) a family \(\mathcal{S}\) of subsets of \(U\). The goal is to find a family of sets \(\mathcal{F} \subseteq \mathcal{S}\) of size \(\atkrbudget\) that maximizes the quantity \(\sum_x w(x)\), where the sum is taken over \(\bigcup \mathcal{F}\).

For \(\atkknn\), we have \(U = \evalpool\), and weights \(w(\x', y')\) are \(\pm 1\) depending on if \(y' = y^+\). Each \(S \in \mathcal{S}\) consists of elements in \(\evalpool\) whose influencing regions mutually intersect.
\ifsqueeze \color{black}
\color{black} \else \color{black}
More specifically, \(S \in \mathcal{S}\) if there exists a point \(\x\) such that
\[
S = \set{(\x', y') \in \evalpool \sucht \x \in \IR(\x', y, \trset)}.
\]
\color{black} \fi

In general, MCP is NP-hard, but \citet{Hochbaum1998analysis} show that choosing sets greedily from \(\mathcal{S}\) gives a \((1 - 1 / e)\)-factor approximation to the optimal solution. Under the assumptions that \chg\ returns optimal single-point attacks and \(k = 1\), \greedyatk\ simulates Hochbaum and Pathria's algorithm, whence Theorem~\ref{thm:greedy_bound} follows.

For \(k > 1\), an optimal \(\atkrbudget\)-point attack is no longer equivalent to MCP. Indeed, a point \((\x', y') \in \evalpool\) may need to be ``covered'' with multiplicity greater than \(1\) in order to have its prediction flipped, and it may be covered by multiple different sets in \(\mathcal{S}\). Thus, the \(k > 1\) case appears to be strictly more general than the MCP and the related ``budgeted MCP'' studied in~\cite{Khuller1999-budgeted}.

To obtain Theorem~\ref{thm:greedy_bound} for \(k > 1\), let \(\atkrset_\opt\) be an optimal \(\atkrbudget\)-point attack---i.e., one that maximizes \(\tsi(\atkrset)\). For any set \(\atkrset\), let
\begin{align}
\nonumber \coverage(\atkrset) = \abs{\set{(\x', y') \in \evalpool \sucht \atkrset \cap \IR(\x', y, \trset) \neq \varnothing}}.
\end{align}
When \(k = 1\), we have \(\tsi(\atkrset) = \coverage(\atkrset)\), and for all \(k\), \(\tsi(\atkrset) \leq \coverage(\atkrset)\).
Let \(\atkrset_\cov\) be a \(\atkrbudget'\)-point attack that maximizes \(\coverage(\atkrset)\). Then we have
\begin{equation}
  \label{eqn:score-coverage}
\nonumber  \tsi(\atkrset_\opt) \leq \coverage(\atkrset_\opt) \leq k' \coverage(\atkrset_\cov).  
\end{equation}
Now consider the set \(\atkrset^*\) returned by \greedyatk\ in an execution in which each call to \chg\ returns an optimal point. Since points are added to \(\atkrset^*\) with multiplicities as large as \(k'\), it may contain as few as \(\atkrbudget' = \lfloor \atkrbudget / k' \rfloor\) distinct points, and we have
\[
\coverage(\atkrset_\cov) \leq (1 - 1/e)^{-1} \coverage(\atkrset^*).
\]
By construction---since each point \(\atkrset^*\) is taken with sufficient multiplicity---we additionally have \(\tsi(\atkrset^*) = \coverage(\atkrset^*)\).
Combining the previous two expressions with~(\ref{eqn:score-coverage}) proves Theorem~\ref{thm:greedy_bound}.

\section{Experiments}
\label{sec:exp}
\begin{table}[t]
  \small
  \centering
\begin{tabular}{|c|c|c|c|c|c|c|}
\hline
\(d\):           & & 2    & 4    & 8    & 16   & 32    \\ \hline
\texttt{Normal}  & $\ell_2$ & 2.72 & 3.68 & 5.32 & 8.34 & 10.00 \\ \hline
                 & $\ell_\infty$ & 2.70 & 3.72 & 5.10 & 7.38 & 8.86 \\ \hline
\texttt{Uniform} & $\ell_2$ & 2.70 & 3.42 & 5.44 & 8.36 & 10.00 \\ \hline
                 & $\ell_\infty$ & 2.70 & 3.56 & 6.10 & 9.40 & 10.00 \\ \hline
\end{tabular}
\caption{{\bf \atkr's Scores on Synthetic Data.} For dimensions $d = \{2, 4,
  8, 16, 32\}$ and both $\ell_2$ and $\ell_\infty$ norms, we report the \atkr~score (out of 10), averaged over trials and training set sizes. \atkr's effectiveness increases with dimension. All standard error of the means were less than 8.2\%.
 \ifsqueeze
 Details are presented in Appendix D.
 \else
 More details are presented in the appendix.
 \fi
}
\label{tab:synth}
\end{table}

Having laid the foundation for a theoretical understanding of attacking \(k\)NN at training time,
we turn to the question of how well our attack works in practice with our ultimate application being defense.
We explore the effectiveness of \chg\ on synthetic data in scenarios small enough to compute the optimal one-point attack, then turn to real-world data and investigate \greedyatk\ on larger instances.
Driven by the results on the synthetic data and with defense in mind, we look at the effect of dimensionality reduction on the attacks. 
Experiments were run on 128-core machines from AWS EC2.
The QCLPs were solved using Mosek (\citet{mosek}).
Pre-processing, plotting, and data analysis were performed with pandas
(\citet{reback2020pandas}), scikit-learn (\citet{scikit-learn}) and matplotlib (\citet{matplotlib}).

In all experiments presented here we focus mostly on \(k\)NN using Euclidean distance,
deferring some experiments using the $\ell_\infty$-norm to the appendix. Given a set of
balls \(\potentialedge\), we determine whether or not there exists a point in
the intersection \(\bigcap_i \ball_i\). For the $\ell_\infty$-norm, a simple interval
intersection algorithm suffices. For the Euclidean norm, this is done by
constructing the following quadratically constrained linear program
(QCLP):\footnote{If \atkr\ is constrained in her selection of \(\mathbf x\),
  these constraints can be added to~(\ref{eqn:qclp}).}
\begin{equation}
  \label{eqn:qclp}
  \scalebox{0.9}{%
    $\min_\x\ 1 \text{ s.t. }
    \|(\x - \mathbf c_1)\|_2^2 \leq r_1^2, \ldots, \|(\x - \mathbf c_{|\potentialedge|})\|_2^2 \leq r_{|\potentialedge|}^2.$}
\end{equation}
The objective function of~(\ref{eqn:qclp}) is not important; if there exists any feasible point, then the balls have a point of mutual overlap.
To reduce the number of times a QCLP solver is invoked, we make a further optimization.
When considering a potential hyper-edge \(\potentialedge = \potentialedgeelem_1, \ldots \potentialedgeelem_{|\potentialedge|}\) the following condition is necessary for \(\potentialedge\) to be in \(E\):
All subsets of \(\potentialedge\) of size \(|\potentialedge|-1\) must be in \(E\).
This check is computationally efficient because, when the check is performed, all edges of size \(|\potentialedge| - 1\) have already been comptued.
This is implemented on line~\ref{line:necessarycondition} %
in Algorithm~\ref{alg:chg}. If the first clause is false, the latter is not checked (the QCLP solver is not invoked).

\subsection{Synthetic Experiments}
\label{synthexp}

We construct two families of synthetic experiments---\texttt{Uniform} and
\texttt{Normal}---small enough to compute the
optimal one-point attack. To create each instance,
we sample $m$ training points and 10 target points from the appropriate distribution, $\mathbf{X} \sim
\mathrm{Unif}([0,1]^d)$ for \texttt{Uniform}, $\mathbf{X} \sim
\mathrm{Normal}(\mathbf{0}, \mathbf{I}_d)$ for \texttt{Normal}.
For each family this procedure is repeated for the grid of values $m \in \{8, 16, 32, 64, 128\}$
and $d \in \{2, 4, 8, 16, 32\}$.
For each $(m,d)$ pair and family we perform 10 trials.
Averages are reported in Table~\ref{tab:synth}, with full details in Appendix \ifsqueeze \color{black} D \color{black} \else \color{blue} B \color{black} \fi.
Universally, \atkr~ is more successful in high dimenions.
This suggests a natural defense strategy: perform dimension reduction prior to learning.

\subsection{MNIST and HAPT}

We consider two real-world datasets.
MNIST (\citet{lecun2010mnist}) consists of \(28 \times 28\) greyscale images of handwritten digits \((d = 784)\).
Human Activity Recognition (HAPT) (\citet{anguita2013public}), is a
dataset of sensor recordings of subjects performing a range of daily activities \((d=561)\).
We use 6 of the labels from HAPT, omitting labels representing transitions
between activities.

For MNIST, we consider 10 attackers, one targeting each digit, and for HAPT we consider 6.
Each attacker has an target pool of 50 points sampled uniformly at random from the original set that are originally classified as their label of interest.
To evaluate dimensionality reduction as a form of defense we used Principle Component Analysis (PCA) (\citet{pearson1901lines}) on each dataset to reduce its dimension to \(d = 2^1, 2^3, 2^5, 2^7, 2^9\).
We train a \(1\)NN classifier using 10,000 and
6,000 points from MNIST and HAPT, respectively, (1,000 from each class).
Every attacker uses \greedyatk~ with a budget of \(\atkrbudget = 1,5, 10, 20\) and a time budget of \(\atkrbudget\) minutes.
\atkr's effectiveness as a function of which class is attacked are omitted for brevity, as we noticed no definitive connection between classes that are difficult to classify and classes that are difficult to attack.
Results are presented in Figure~\ref{fig:real_world_results}.

\begin{figure}[t]

\centering
\includegraphics[width=\defaultfigsize]{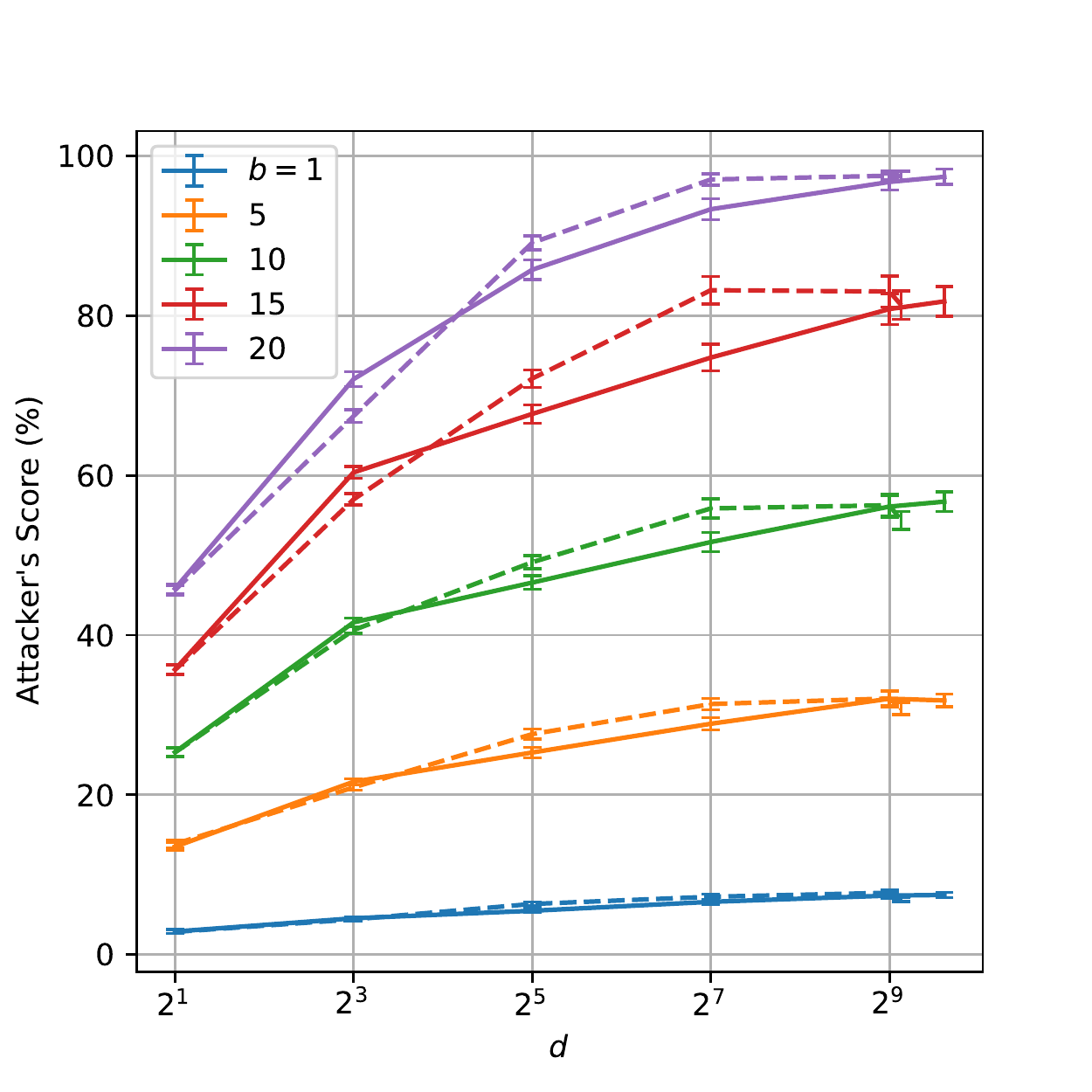}
\caption{{\bf ATKR's Effectiveness as a Function of Dimension.}
  For the original datasets and those reduced via PCA, we plot \atkr's score as a percentage of \(|\evalpool|\) averaged over 10 trials for each label\ifsqueeze.\else, for various attacker budgets.\fi~
  Error bars denote standard error of the mean.
  Attacks on MNIST (HAPT) datasets are shown as solid (dashed) lines.
  Universally, reducing the dimension is an effective defense.
}
\label{fig:real_world_results}
\end{figure}

The attackers are very effective.
Even with a modest budget of \(b = 10\), \atkr~ is able to affect the predicted label of over half the target pool in both original datasets.
Secondly, as in the synthetic experiments, the effectiveness of the attacks decreases as the data's dimension is reduced.

For dimensionality reduction to be a useful defense strategy, the defender would need to reduce the dimension while maintaining good performance on the underlying task. To understand the effects of reducing the dimension on the learning process, we report the percentage of variance explained (a measure independent of the choice of learner) and the zero-one loss of \(1\)NN measured on a hold-out set of size 1,000 (randomly selected, disjoint from the training set) in Table~\ref{tab:real_world_results}.
\(k\)NN continues to perform well even as the dimension is reduced (a well understood phenomenon) until the dimension is very low.
\atkr's score, on the other hand, steadily decreases with the dimension.
In the case of MNIST data, reducing to \(32\) dimensions decreased the attacker's effectiveness by between 26\%  (\(\atkrbudget = 1\)) and 12\% (\(\atkrbudget = 20\)) while increasing prediction error by less than 1\%.
PCA served as a more costly defense for the HAPT data.
At \(d = 32\) the attacker's score decreased by between 11\% and 8\%, but the prediction loss increased by about 28\%.
 Dimensionality reduction can be an effective defense for \(k\)NN against training-time attacks. %
Experiments on other real-world datasets showed qualitatively similar behavior and results are presented in Appendix D.

\begin{table}[t]
  \centering
  \small
\begin{tabular}{|c|c|c|c|c|}\hline
                                & \multicolumn{2}{c|}{MNIST} & \multicolumn{2}{c|}{Hapt} \\ \hline
\multicolumn{1}{|c|}{\(d\)}     & \texttt{HOL}               & \texttt{VarE}             & \texttt{HOL}         & \texttt{VarE}            \\ \hline\hline
\multicolumn{1}{|c|}{Original}  & 0.102 (0.483)              & 1.000                     & 0.087 (0.285)        & 1.000          \\ \hline 
\multicolumn{1}{|c|}{512}       & 0.126 (0.119)              & 0.999                     & 0.093 (0.141)        & 1.000         \\ \hline
\multicolumn{1}{|c|}{128}       & 0.111 (0.145)              & 0.936                     & 0.081 (0.158)        & 0.983     \\ \hline
\multicolumn{1}{|c|}{32}        & 0.103 (0.143)              & 0.745                     & 0.111 (0.154)        & 0.891     \\ \hline
\multicolumn{1}{|c|}{8}         & 0.201 (0.203)              & 0.438                     & 0.217 (0.158)        & 0.774      \\ \hline
\multicolumn{1}{|c|}{2}         & 0.613 (0.619)              & 0.169                     & 0.431 (0.426)        & 0.661     \\ \hline
\end{tabular}
\caption{{\bf Loss and Variance Explained.} We report the zero-one loss on a
  held out dataset (\texttt{HOL}) for $\ell_2$-norm ($\ell_\infty$-norm) and the proportion of variance explained (\texttt{VarE}) for the each dataset (the original and those with dimension reduced by PCA).
  Reducing the dimension to 32 or 128 hardens the learner against attack while maintaining similar \texttt{HOL}.
}
\label{tab:real_world_results}
\end{table}

\section{Related Work}
\label{sec:relwork}
Since its introduction~(\citet{fix1951discriminatory}), \(k\)NN and related algorithms (e.g., \(r\)NN) have been extensively studied in contexts where no adversary is at play (\citet{shakhnarovich2006nearest,bhatia2010survey}).

Training-time (a.k.a.~poisoning or input manipulation) attacks were developed against various machine learning algorithms from deep networks~(\citet{koh2017icml}) to support vector machines~(\citet{biggio2012icml}) to linear regression models~(\citet{jagielski2018oakland,alfeld2019attacking}) to collaborative filtering~(\citet{li2016nips,chen2020data}) and online centroid anomaly detection~(\citet{kloft2010aistats}).
We complement this by furthering knowledge of another canonical learning method.
We believe we are the first to investigate (theoretically or otherwise) training-time attacks against \(k\)NN.

Deployment-time attacks, often called ``adversarial examples'' or ``adversarial perturbations'', have been given a great deal of attention, especially against neural networks in recent years.
With a focus on graph neural-networks Entezari {\it et al.} similarly observed that attackers are generally are less effective in low dimensions (\citet{entezari2020all}).
We direct the reader to~\citet{biggio2018wild,vorobeychik2018adversarial,joseph2018adversarial} for a general review and to \citet{serban2020adversarial,hazan2017adversarial} specifically for neural networks.
Others have developed attacks against and analyzed the robustness of \(k\)NN and other non-parametric methods~\citet{wang2018analyzing,bhattacharjee2020non}).
In contrast to our work, they model an attacker acting at deployment-time, not training-time.

$\atkknn$ is closely related to the ``product positioning problem'' in marketing theory~\cite{Albers1977procedure}, whose complexity is analyized in~\cite{Crama1995complexity}.
Their analysis gives an alternative proof of the special case of our Theorem~\ref{thm:max-int} when \(p = 2\) via a reduction from the maximum clique problem.
Our proof of Theorem~\ref{thm:max-int} is an adaptation of  the argument in~\cite{Amaldi1995complexity}, which shows that finding the maximum number of feasible linear constraints is NP-hard.
The advantage of our proof of Theorem~\ref{thm:max-int} is threefold: it is more general in that it covers \(\ell_p\) norms for \(p \in (1, \infty)\), it is both technically simpler than the argument of~\citet{Crama1995complexity} and it is conceptually consistent with the remainder of our paper.
\citet{Crama1995complexity}~also describes an algorithm for $\maxint$ that runs in time $O(n^{\dimn + 1})$, where $n$ and $\dimn$ denote the number of points (i.e., size of our training set) and the dimension, respectively.
The high-degree polynomial run-time of Crama et al.'s algorithm, however, is inherent to their approach, as they first test a set of $\Theta(n^\dimn)$ potential single-point attacks before evaluating the scores of these attacks.
As an anytime algorithm, \chg\ finds good (albeit not necessarily optimal) solutions quickly.

\section{Conclusions} %
\label{sec:conc}
We investigated the task of performing training-time attacks against nearest-neighbor based classifiers.
We model an attacker adding points to the training set with the goal of affecting the classifications of points in an target pool.
We proved that finding an optimal attack against \(k\)NN is NP-Hard, even when the attacker can add only a single data point to the training set.
We introduced the anytime algorithm \chg\ to efficiently compute an effective one-point attack and \greedyatk\ to compute a general attack.
We provided a bound for \greedyatk\ when \chg\ finds the optimal solution, and this bound holds in general if \chg\ is replaced with a constant-approximation algorithm.
\ifsqueeze \color{black}
With an eye toward defense strategies, we conducted an empirical investigation on synthetic and real-world data.
\color{black} \else \color{black}
With an eye toward defense strategies, we conducted an empirical investigation of the effects of dimensionality reduction on synthetic and real-world data.
\color{black} \fi
Our experiments demonstrated both that the attack is highly effective in practice and that dimensionality reduction is often an effective defense strategy.

We close with open problems as potential future work:
1.~Is there a polynomial-time, constant-factor approximation algorithm which solves the same problem as \chg?
Such an algorithm with approximation ratio \(\alpha\) would yield a \((1 - e^{-\alpha}) / k'\) approximation when used as a subroutine by \greedyatk.
2.~Is there a bound for \greedyatk's performance that is independent of \(k\)?
3.~Our defense reduces the dimensionality \(d\).
We observe empirically that this reduces the effectiveness of the optimal attack.
However, the worst-case runtime of \chg\ decreases with \(d\).
Are there effective defenses which act by increasing the computational burden of the attacker instead of (or in addition to) reducting the effectiveness of the optimal attack?

\vfill

\pagebreak

~

\vfill

\pagebreak

\bibliographystyle{apalike}
\ifsqueeze
    {\fontsize{10.5}{\lineskip}\selectfont \bibliography{bibliography.bib}}
\else
    \bibliography{bibliography.bib}
\fi

\ifdraft

\else \fi

\vfill

\pagebreak

\ifincludeapp

\begin{onecolumn}

  \begin{center}
    \huge{Appendices for\\
      Training-Time Attacks Against k-Nearest Neighbors}
  \end{center}
  
\ifsqueeze
  \section*{Appendix A: Notation Table}

\vfill  \pagebreak
  
\section*{Appendix B: Additional Proofs}
\subsection*{Proofs for Theorems~\ref{thm:hardness}  and~\ref{thm:max-int} from Section~\ref{sec:hardness}}

We use the following three lemmas to establish Theorem~\ref{thm:max-int}.

\begin{lemma}
  \label{lem:ball-coordinate-sign}
  Suppose $\x \in \Rd$. Then for each $i \in \vertices$ and $ij \in \edges$ we have (1) $\x \in \ball_i \implies x_i \geq 0$, and (2) $\x \in \ball_{ij} \implies x_i + x_j < 0$. Therefore, for every edge $ij \in \edges$, we have $\ball_i \cap \ball_j \cap \ball_{ij} = \varnothing$.
\end{lemma}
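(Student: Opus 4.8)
The plan is to prove the two implications (1) and (2) directly from the defining inequalities~(\ref{eqn:balli}) and~(\ref{eqn:ballij}), and then deduce the pairwise‑disjointness statement $\ball_i \cap \ball_j \cap \ball_{ij} = \varnothing$ as an immediate corollary by comparing signs on the $i$‑th and $j$‑th coordinates.

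For implication (1): if $\x \in \ball_i$, then in particular the $i$‑th summand in~(\ref{eqn:balli}) satisfies $\abs{x_i - r}^p \le r^p$, so $\abs{x_i - r} \le r$, which gives $0 \le x_i \le 2r$; in particular $x_i \ge 0$. This half uses nothing beyond monotonicity of $t \mapsto t^{1/p}$ on $[0,\infty)$.

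For implication (2): if $\x \in \ball_{ij}$, then dropping the nonnegative terms with $k \ne i,j$ in~(\ref{eqn:ballij}) gives $\abs{x_i + r}^p + \abs{x_j + r}^p \le 2\abs{r - \eps}^p$. Write $a = x_i + r$ and $b = x_j + r$. I would bound $a + b \le \abs{a} + \abs{b}$ and then apply the power‑mean (equivalently, $\ell_p \hookrightarrow \ell_1$) inequality, which for $p \ge 1$ follows from convexity of $t \mapsto t^p$: $\abs{a} + \abs{b} \le 2^{1-1/p}\bigl(\abs{a}^p + \abs{b}^p\bigr)^{1/p} \le 2^{1-1/p}\bigl(2\abs{r-\eps}^p\bigr)^{1/p} = 2\abs{r-\eps}$. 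Since we are free to choose $\eps \in (0, 2r)$ (and the later lemmas will fix such a choice), we get $a + b \le 2\abs{r-\eps} < 2r$, i.e. $x_i + x_j < 0$ — in fact $x_i + x_j \le 2(\abs{r-\eps} - r)$, a quantity bounded strictly away from $0$. Finally, if some $\x$ lay in $\ball_i \cap \ball_j \cap \ball_{ij}$, then applying (1) to $\ball_i$ and to $\ball_j$ gives $x_i \ge 0$ and $x_j \ge 0$, hence $x_i + x_j \ge 0$, contradicting $x_i + x_j < 0$ from (2); therefore $\ball_i \cap \ball_j \cap \ball_{ij} = \varnothing$.

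The only nontrivial ingredient is the power‑mean inequality used in step (2), and this is precisely where the standing hypothesis $p \ge 1$ of the $\maxint$ reduction is needed (convexity of $t \mapsto t^p$); everything else is elementary coordinate bookkeeping. It is worth flagging that this computation also identifies the admissible parameter range $\eps \in (0,2r)$ for the present lemma; the companion lemmas establishing Theorem~\ref{thm:max-int} will impose the complementary constraints on $r$ and $\eps$ needed to make the \emph{desired} pairwise intersections $\ball_i \cap \ball_j$, $\ball_i \cap \ball_{ij}$, $\ball_j \cap \ball_{ij}$ nonempty.
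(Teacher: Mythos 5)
Your proof is correct and follows essentially the same approach as the paper: part (1) is the identical one-line observation that $\abs{x_i - \radius}^p > \radius^p$ whenever $x_i < 0$, and the disjointness conclusion is the same sign contradiction. For part (2) the paper argues by contraposition — assuming $x_i + x_j \geq 0$, reducing WLOG to $x_i \geq \abs{x_j}$ with $x_j < 0$, and applying Jensen's inequality to the midpoint decomposition $\radius = \tfrac12(\radius + x_j) + \tfrac12(\radius - x_j)$ to get $\abs{\radius + x_i}^p + \abs{\radius + x_j}^p \geq 2\radius^p$ — whereas you run the implication directly via the power-mean inequality $\abs{a} + \abs{b} \leq 2^{1-1/p}(\abs{a}^p + \abs{b}^p)^{1/p}$; both rest on the same convexity of $t \mapsto t^p$ for $p \geq 1$, but your version avoids the case split, yields the quantitative bound $x_i + x_j \leq 2(\abs{\radius - \eps} - \radius)$, and correctly isolates the needed parameter constraint $\eps \in (0, 2\radius)$ (satisfied by the paper's later choice $\radius = 9$, $\eps = 1/\dimn$).
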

\begin{proof}%
  For the first claim, observe that if $x_i < 0$, then the first term in the sum of~(\ref{eqn:balli}) satisfies $\abs{x_i - \radius}^p > \radius^p$, hence the inequality in~(\ref{eqn:balli}) is not satisfied.

  For the second claim, suppose $x_i + x_j \geq 0$. Without loss of generality, assume that $x_i \geq \abs{x_j}$. Suppose $x_j < 0$ (the case where $x_i, x_j \geq 0$ is trivial). Then we compute:
  \begin{align*}
    \abs{\radius}^p &= \abs{\frac 1 2 (\radius+ x_j) + \frac 1 2 (\radius - x)}^p\\
    &\leq \paren{\frac 1 2 \abs{\radius+ x_j} + \frac 1 2\abs{\radius- x_j}}^p\\
    &\leq \frac 1 2 \abs{\radius+ x_j}^p + \frac 1 2 \abs{\radius- x_j}^p\\
    &\leq \frac 1 2 \abs{\radius+ x_j}^p + \frac 1 2 \abs{\radius+ x_i}^p.
  \end{align*}
  The first inequality holds from the triangle inequality combined with the monotonicity of $f(x) = x^p$; the second inequality is Jensen's inequality, and the final equality comes from the assumption that $x_i \geq \abs{x_j}$. Combining the first and final expression, we obtain
  \[
  \abs{\radius+ x_i}^p + \abs{\radius+ x_j}^p \geq 2 r^p
  \]  
  which implies that the inequality in~(\ref{eqn:ballij}) is not satisfied.

  Finally, suppose $\x \in \ball_i \cap \ball_j \cap \ball_{ij}$ Then by item~1 in the lemma we have $x_i, x_j \geq 0$, while by item~2, we have $x_i + x_j < 0$, a contradiction.
\end{proof}

While Lemma~\ref{lem:ball-coordinate-sign} shows that any three balls of the form $\ball_i, \ball_j, \ball_{ij}$ are not mutually intersecting, every two of the balls do intersect. In fact, the following lemma shows that every sub-family $\balls'  \subseteq \balls$ is mutually intersecting, so long as $\balls'$ does not contain three balls of the form $\ball_i, \ball_j, \ball_{ij}$---that is, $\balls'$ is mutually intersecting so long as the set of indices $i \in \vertices$ with $\ball_i \in \balls'$ forms an independent set in $\graph$.

\begin{lemma}
  \label{lem:indep-set}
  Let $\graph$ be a graph on $\dimn$ vertices. Suppose $\balls = \varphi(\graph)$ and $I \subseteq \vertices$ forms an independent set in $\graph$. Let $\balls_I = \balls \setminus \set{\ball_j \sucht j \notin I}$. Then choosing $\eps = \frac 1 \dimn$ and $r = 9$, we have $\bigcap_{\ball \in \balls_I} \ball \neq \varnothing$.
\end{lemma}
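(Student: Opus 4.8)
The plan is to exhibit an explicit point $\x^\star \in \Rd$ and verify that it lies in every ball of $\balls_I$. First note that $\balls_I = \set{\ball_i \sucht i \in I} \cup \set{\ball_{ij} \sucht ij \in \edges}$: all edge balls are retained, and only the vertex balls indexed by $I$. Lemma~\ref{lem:ball-coordinate-sign} already constrains any point of these balls --- nonnegative on each coordinate in $I$, and with $x_i + x_j < 0$ on the coordinates of every edge --- and, crucially, since $I$ is independent every edge $ij \in \edges$ has an endpoint outside $I$, so these requirements do not conflict. This motivates taking $\x^\star$ to be a small positive value on the coordinates in $I$ and a slightly larger-in-magnitude negative value on the rest: $x^\star_i = \eps$ for $i \in I$ and $x^\star_j = -c\eps$ for $j \notin I$, where $c \geq 1$ is an absolute constant fixed in the course of the argument.

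It then remains to check $\x^\star \in \ball$ for each $\ball \in \balls_I$, which I would split into three cases by ball type. (i) For $\ball_i$ with $i \in I$, inequality~(\ref{eqn:balli}) becomes $(r-\eps)^p + (\abs{I}-1)\eps^p + \abs{\vertices \setminus I}\,(c\eps)^p \leq r^p$; there are fewer than $\dimn$ ``small'' terms, each at most $(c\eps)^p$, so it suffices that the margin $r^p - (r-\eps)^p$, which is at least $p\eps(r-\eps)^{p-1}$, absorbs them. (ii) For $\ball_{ij}$ with $i,j \notin I$, both of those coordinates equal $-c\eps$, so the two leading terms of~(\ref{eqn:ballij}) contribute $2(r-c\eps)^p$, and we need this plus the remaining fewer-than-$\dimn$ small terms to stay below $2(r-\eps)^p$; this holds once $c > 1$, using the slack $2\bigl[(r-\eps)^p - (r-c\eps)^p\bigr]$. (iii) For $\ball_{ij}$ with $i \in I$, $j \notin I$, the term $\abs{x^\star_i + r}^p = (r+\eps)^p$ now exceeds $r^p$, so the point must spend the headroom created on coordinate $j$ by taking $-c\eps$ in place of $-\eps$: one needs $(r+\eps)^p + (r-c\eps)^p + (\text{small terms}) \leq 2(r-\eps)^p$, which is exactly what the choice of $c$ is made to handle. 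Each case reduces to an elementary inequality in $r, \eps, \dimn, p$, and substituting $\eps = 1/\dimn$ and $r = 9$ (after fixing $c$) is precisely what makes all three go through.

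The main obstacle is arranging (i)--(iii) to hold simultaneously and uniformly over $p \in (1,\infty)$ and all $\dimn$. For $p = 2$ everything collapses to a transparent quadratic computation, but for general $p$ one must control the ratios $(r \pm \eps)^p/r^p$ against the margin $r^p - (r-\eps)^p$: near $p = 1$ the $\ell_p$ ball is ``spiky'' and the margin in (i) is slimmest, while large $p$ makes the ball nearly cubical. I would handle this with the toolkit already used in Lemma~\ref{lem:ball-coordinate-sign}: monotonicity and convexity of $t \mapsto t^p$ on $[0,\infty)$, Jensen's inequality, and the mean-value estimate $p\eps(r-\eps)^{p-1} \leq r^p - (r-\eps)^p \leq p\eps r^{p-1}$, reducing each case to something checkable by hand. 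Finally I would dispose of the degenerate cases $I = \vertices$ (no edges, and $\x^\star = \mathbf 0 \in \bigcap_i \ball_i$) and $I = \varnothing$ (only edge balls, which is case (ii), and which in particular shows that all the edge balls mutually intersect).
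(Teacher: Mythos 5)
Your strategy coincides with the paper's: both exhibit an explicit witness that is slightly positive on the coordinates indexed by $I$ and slightly negative on the rest, and verify membership in each ball of $\balls_I$ by a first-order estimate on $t \mapsto t^p$ (the paper's witness is $t v^I$ with entries $\tfrac12$ on $I$ and $-1$ off $I$, i.e.\ your $c = 2$). The essential difference is that the paper does \emph{not} lock the witness's scale to $\eps$: it takes $t$ ``sufficiently small'' as a function of $\dimn$ and $p$, and then sets $\eps = t^2$, so that the radius deficit of the edge balls is second order in $t$ and is paid for by the first-order gain coming from the negative coordinates.

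That decoupling is exactly what your write-up is missing, and the gap is not merely bookkeeping. With your witness, every coordinate other than $x_i$ has magnitude at least $\eps$, so case~(i) forces $(\dimn-1)\eps^p \leq \radius^p - (\radius-\eps)^p \leq p\,\eps\,\radius^{p-1}$, i.e.\ $(\dimn - 1)\eps^{p-1} \leq p\,\radius^{p-1}$, i.e.\ $\eps \lesssim \dimn^{-1/(p-1)}$. For $1 < p < 2$ this is strictly smaller than $1/\dimn$: with $p = 3/2$, $\radius = 9$, $\eps = 1/\dimn$ the condition reads $(\dimn-1)/\sqrt{\dimn} \leq 4.5$, which already fails at $\dimn = 30$, and taking $c$ larger only hurts. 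So substituting $\eps = 1/\dimn$ and $\radius = 9$ does \emph{not} make your three cases go through uniformly over $p \in (1, \infty)$; the mean-value estimate you invoke correctly identifies the danger zone near $p = 1$ but cannot rescue these constants. You should also be aware that the paper's own appendix proof quietly abandons the stated constants (it proves the claim for a $p$- and $\dimn$-dependent $\eps = t^2$, not for $\eps = 1/\dimn$), and there is a genuine tension here: an edge ball $\ball_{ij}$ with $i, j \notin I$ forces the witness's negative coordinates to have magnitude at least about $\eps$, while the vertex balls force all coordinates to be $O(\dimn^{-1/(p-1)})$, so for $p < 2$ a correct argument must let $\eps$ shrink faster than $1/\dimn$. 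The safe repair of your proof is the paper's: introduce an independent small parameter $t$ for the witness, verify the vertex and edge balls to first order in $t$, and only then choose $\eps$ as a higher-order quantity in $t$.
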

\begin{proof}
  For $i \in [\dimn]$ and $I \subseteq \vertices$ an independent set, define the vector $v^I \in \R^\dimn$ as follows:
  \begin{equation}
    v^I_j = 
    \begin{cases}
      \frac 1 2 &\text{if } i \in I\\
      -1 &\text{otherwise.}
    \end{cases}
  \end{equation}
  We will show that for a suitable choice of $\eps$ and sufficienty small value of $t > 0$, a point of the form $\x = t v^I$ is contained in $\bigcap \balls_I$. That is, we must show that
  \begin{align}
    \norm{\x - \basis_i}_p^p &\leq 1, and \label{eqn:vertex-ball}\\
    \norm{\x + \basis_i + \basis_j}_p^p &\leq 2 (r - \eps)^p = 2 r - 2 p \eps - O(\eps^2). \label{eqn:edge-ball}
  \end{align}
  To this end, for each $i \in I$, we first observe that
  \begin{align*}
    \norm{\x - \basis_i}_p^p = \abs{1 - x_i}^p + \sum_{j \neq i} x_j^p = 1 - \frac 1 2 p t + d t^p + O(t^2).
  \end{align*}
  From this expression, Taylor's theorem implies that for all sufficiently small $t > 0$, Equation~(\ref{eqn:vertex-ball}) is satisfied.
  
  Similarly, for each $ij \in \edges$ with $j \notin I$, we compute
  \begin{align*}
    \norm{\x + \basis_i + \basis_j}_p^p &= (1 - t)^p + (1 - x_i)^p + \sum_{\ell \neq i, j} x_\ell^p\\
    &= \abs{1 + x_i}^p + \abs{1 + x_j}^p + \sum_{\ell \neq i, j} x_\ell^p\\
    &\leq \abs{1 + \frac 1 2 t}^p + \abs{1 - t}^p + \sum_{\ell \neq i, j} t^p\\
    &= 2 - \frac 1 2 p t + d t^p + O(t^2).
  \end{align*}
  Setting $\eps = t^2$ and again applying Taylor's theorem, we obtain that for sufficiently small $t$, Equation~(\ref{eqn:edge-ball}) is also satisfied. Thus, for sufficiently small $t$, we have $\x \in \bigcap \balls_I$, as desired.
\end{proof}

Next, we establish the converse of Lemma~\ref{lem:indep-set}: if $\pt$ is contained in many balls $\ball \in \balls$, then $\graph$ contains a large independent set.

\begin{lemma}
  \label{lem:overlap-is}
  Let $\graph = (\vertices, \edges)$ be a graph with $\dimn = \abs{\vertices}$ vertices and $m = \abs{\vertices}$ edges, and let $\balls = \varphi(\graph)$. Suppose there exists a point $\pt \in \Rd$ contained in the intersection of $m \dimn + M$  balls $\ball \in \balls$ for some $M \geq 1$ (where each ball of the form $\ball_{ij} \in \balls$ is taken with multiplicity $\dimn$). Then $\graph$ contains an independent set $I \subseteq \vertices$ of size $M$.
\end{lemma}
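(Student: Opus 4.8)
The plan is a counting argument on how many balls of $\balls = \varphi(\graph)$ a single point $\pt$ can lie in, combined with the contrapositive of Lemma~\ref{lem:ball-coordinate-sign}. Write $I = \set{i \in \vertices \sucht \pt \in \ball_i}$ for the set of vertex-balls containing $\pt$, and $J = \set{ij \in \edges \sucht \pt \in \ball_{ij}}$ for the edge-balls containing $\pt$. Recalling that each $\ball_i$ appears in $\balls$ with multiplicity $1$ and each $\ball_{ij}$ with multiplicity $\dimn$, the number of balls of $\balls$ containing $\pt$, counted with multiplicity, is exactly $\abs{I} + \dimn\abs{J}$; by hypothesis this is at least $m\dimn + M$.

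The first step is to show that $\pt$ in fact lies in \emph{every} edge-ball, i.e.\ $\abs{J} = m$. This follows because $\abs{I} \le \abs{\vertices} = \dimn$, so $\dimn\abs{J} \ge m\dimn + M - \dimn$; since $M \ge 1$ and $\abs{J}$ is an integer bounded above by $\abs{\edges} = m$, we must have $\abs{J} = m$. Feeding this back in gives $\abs{I} \ge M$. The second step is to observe that $I$ is an independent set in $\graph$: if some edge $ij \in \edges$ had $i, j \in I$, then $\pt \in \ball_i \cap \ball_j$ would force $x_i \ge 0$ and $x_j \ge 0$ by part~(1) of Lemma~\ref{lem:ball-coordinate-sign}, while $\pt \in \ball_{ij}$ (which holds since $\abs{J} = m$) would force $x_i + x_j < 0$ by part~(2), a contradiction. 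Hence $I$ contains no edge of $\graph$, and any $M$-element subset of $I$ (which exists since $\abs{I} \ge M$) is the desired independent set.

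There is no deep obstacle here; the one thing to get right is the bookkeeping of multiplicities. It is precisely the fact that each $\ball_{ij}$ is taken with multiplicity $\dimn$ (matching the number of vertex-balls) that forces a sufficiently large intersection to contain \emph{all} of the edge-balls, which is exactly the regime in which Lemma~\ref{lem:ball-coordinate-sign} pins down $I$. One should also sanity-check the edge cases --- $M \le \dimn$ holds automatically, and the hypothesis $M \ge 1$ is used only to round $\abs{J}$ up to $m$ --- but these are immediate.
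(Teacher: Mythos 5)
Your proof is correct and follows essentially the same route as the paper's: use the multiplicity $\dimn$ of the edge-balls to force $\pt$ into every $\ball_{ij}$, conclude $\abs{I}\geq M$ by counting, and invoke Lemma~\ref{lem:ball-coordinate-sign} to see that $I$ is independent. Your explicit bookkeeping with $\abs{I}+\dimn\abs{J}$ is a slightly more careful rendering of the paper's one-line counting step, but the argument is the same.
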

\begin{proof}%
  Suppose $\pt = (p_1, p_2, \ldots, p_\dimn)$ is contained in $m \dimn + M$ balls $\ball \in \balls$. First, we claim that $\pt \in \bigcap_{ij \in \edges} \ball_{ij}$. Since each $\ball_{ij}$ appears with multiplicity $\dimn$ and $\abs{\balls} = m \dimn + \dimn$, $\pt \notin \ball_{ij}$ implies that $\pt$ is contained in at most $\abs{\balls} - \dimn = m \dimn$ balls, contradicting the hypothesis. Therefore, we have $\pt \in \bigcap_{ij \in \edges} \ball_{ij}$, as claimed.

  Now consider the set $I = \set{i \sucht \pt \in \ball_i} \subseteq \vertices$. Note that $\abs{I} = M$ by the hypothesis. We claim that $I$ is an independent set in $\graph$. To see this, suppose $i \in I$ so that $i \in \ball_i$, and suppose $ij \in \edges$. Since $i \in \ball_i \cap \ball_{ij}$ (by the previous claim), Lemma~\ref{lem:ball-coordinate-sign} implies that $\pt \notin \ball_j$. Therefore, $j \notin I$. Thus, for all $i \in I$ and $ij \in \edges$, we have $j \notin I$---i.e., $I$ is an independent set of size $M$.
\end{proof}

\begin{proof}[Proof of Theorem~\ref{thm:max-int}]
  Consider the reduction $\varphi$ described above. Clearly, for any graph $\graph = (\vertices, \edges)$, $\balls = \varphi(\graph)$ can be computed in polynomial time. By Lemmas~\ref{lem:indep-set} and~\ref{lem:overlap-is}, $\graph$ has an independent set $I$ of size $M$ if and only if $\balls$ has a maximum intersection of size $\abs{\vertices} \abs{\edges} + M$. Therefore, $\varphi$ is a polynomial-time reduction from $\maxis$ to $\maxint$, hence $\maxint$ is NP-hard.
\end{proof}

\begin{proof}[Proof of Theorem~\ref{thm:hardness}]
  Let $\graph = (\vertices, \edges)$ be a graph, and $\balls = \varphi(\graph)$ the corresponding $\maxint$ instance described above. We will show that $\balls$ is realized by an $\atkknn$ instance that can be efficiently constructed from $\graph$. Thus, the reduction from $\maxis$ to $\maxint$ described in Section~\ref{sec:hardness} yields a reduction from $\maxis$ to $\atkknn$, whence Theorem~\ref{thm:hardness} follows.

  Given $\graph$, take the evaluation pool $\evalpool$ to consist of the centers of all the balls $\ball \in \balls$. That is,
  \[
  \evalpool = \set{(\radius \basis_i, y^-) \sucht i \in \vertices} \cup \set{(-\radius \basis_i - \radius \basis_j, y^-) \sucht ij \in \edges}.
  \]
  Each $-\radius \basis_i - \radius \basis_j$ occurs with multiplicity $\dimn$. The training set $\trainingset$ is defined to be
  \[
  \begin{split}
    \trainingset =& \set{(2 \radius \basis_i, y^+) \sucht i \in \vertices}\\
    \quad&\cup \set{(- (2 \radius - \eps) (\basis_i + \basis_j), y^+) \sucht ij \in \edges}.
  \end{split}
  \]
  See Figure~\ref{fig:hardness} for an illustration of $\evalpool$ and $\trainingset$. It is straightforward to verify that (1) for each $(\radius \basis_i, y^-) \in \evalpool$, $(2 \radius \basis, y^+) \in \trainingset$ is its nearest neighbor, which is at distance $\radius$, and (2) for each $(-\radius \basis_i - \radius \basis_j, y^-) \in \evalpool$, $(- (2 \radius - \eps) (\basis_i + \basis_j), y^+) \in \trainingset$ is its nearest neighbor which is at distance $\sqrt 2 (\radius - \eps)$. Therefore, the pair $(\trainingset, \evalpool)$ realizes the $\maxint$ instance $\varphi(\graph)$. Since $(\trainingset, \evalpool)$ can clearly be computed from $\graph$ in polynomial time, the NP-hardness of $\atkknn$ follows from Theorem~\ref{thm:max-int}.
\end{proof}

The proof above establishes the NP-hardness of $\atkknn$ only for $\atkrbudget = 1$ and $k = 1$. NP-hardness for any fixed constant $\atkrbudget > 1$ and $k = 1$ can be established as follows. Given any instance $(\trainingset, \evalpool, 1)$ of $\atkknn$ constructed as in the proof of Theorem~\ref{thm:hardness}, form the instance $(\trainingset', \evalpool', \atkrbudget)$ as follows. For $i = 0, 1, \ldots, \atkrbudget - 1$, let $(\trainingset_i, \evalpool_i)$ be the sets of points formed by translating $(\trainingset, \evalpool)$ by $3 i \radius \basis_1$. Then take
\[%
\trainingset' = \bigcup_{i = 0}^{\atkrbudget - 1} \trainingset_i \quad\text{and}\quad \evalpool' = \bigcup_{i = 0}^{\atkrbudget - 1} \evalpool_i.
\]
It is straightforward to show that the $\maxint$ instance corresponding to $(\trainingset', \evalpool')$ consists of $\atkrbudget$ translated and non-intersecting instances of the original $\maxint$ instance. Therefore, solving the budget-$\atkrbudget$ problem on $(\trainingset', \evalpool')$ is equivalent to solving $\atkrbudget$ (identical) instances of the budget-$1$ instance. Since the latter problem is NP-hard, the $\atkrbudget > 1$ is NP-hard as well.

Essentially the same argument also yields the NP-hardness of attacking $k$NN for any (odd) $k > 1$. In particular, we must modify $\trainingset$ to ensure that the influencing region of each point in $\evalpool$ is the same as the $k = 1$ case depicted in Figure~\ref{fig:hardness}. To this end, it suffices modify $\trainingset$ as follows. For each $(\x_i, y_i) \in \evalpool$ add $(k-1)/2$ copies of $(\x, y^+)$ and $(k-1)/2$ copies of $(\x, y^-)$ to $\trainingset$. With this modification of $\trainingset$, it is straightforward to verify that adding any single point $(\x', y')$ with $\x' \in \ball_i$ will result in $(\x_i, y_i)$ being classified as $y'$. That is, $\ball_i = \IR(\x_i, y_i)$. Thus, Theorem~\ref{thm:hardness} follows for any odd $k > 1$ as well.

\subsection{Proof of Theorem~\ref{thm:greedy_bound}}

Towards proving Theorem~\ref{thm:greedy_bound}, we prove the following more general result, from which Theorem~\ref{thm:greedy_bound} follows taking $\beta = 1$.

\begin{theorem}
\label{thm:greedy_bound_general}
Suppose all elements $(\x, y') \in \evalpool$ have the same label, say $y' = y$.
Let $k' = \lceil k / 2 \rceil$, and suppose each call to \chg\ in \greedyatk\ returns a single-point attack whose score is a $\beta$-fraction of the optimal coverage single point attack ($0 \leq \beta \leq 1$). Then the score of the $\atkrset$ returned by \greedyatk\ is a $\frac{1}{k'}(1 - e^{-\beta})$-fraction of the optimal $\atkrbudget$-point attack's score. In particular, if each call to \chg\ returns an optimal single point attack (i.e., $\beta = 1$), then $\atkrset$'s score is within $\frac{1}{k'}(1 - 1/e)$ of the optimal attack.
\end{theorem}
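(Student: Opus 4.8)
\textbf{Proof proposal (for Theorem~\ref{thm:greedy_bound_general}).}
The plan is to route everything through the $\coverage$ functional, reducing the claim to the classical approximate-greedy guarantee for weighted maximum coverage. First I would set up a static coverage instance: freeze the influencing regions $\IR(\x', y, \trset)$ produced by Algorithm~\ref{alg:construct_hyperspheres}, take the ground set to be $\evalpool$, and say an attack point $\x$ \emph{covers} $(\x', y') \in \evalpool$ when $\x \in \IR(\x', y, \trset)$. Then $\coverage(\atkrset) = \abs{\bigcup_{\x \in \atkrset} \set{(\x', y') \in \evalpool : \x \in \IR(\x', y, \trset)}}$ is a monotone coverage (hence submodular) function with $\coverage(\varnothing) = 0$, and because every cost satisfies $c \le k' = \lceil k / 2 \rceil$ we have $\tsi(\atkrset) \le \coverage(\atkrset)$ always, with equality whenever each point of $\atkrset$ is inserted with multiplicity $k'$ (in particular always when $k = 1$, where $c \equiv 1$).

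Next I would establish the two bracketing inequalities. Let $\atkrset_\opt$ be an optimal $\atkrbudget$-point attack and put $\atkrbudget' = \lceil \atkrbudget / k' \rceil$. Splitting $\atkrset_\opt$ into $k'$ blocks, each of size at most $\atkrbudget'$, and using subadditivity of $\coverage$ gives $\tsi(\atkrset_\opt) \le \coverage(\atkrset_\opt) \le k'\,\coverage(\atkrset_\cov)$, where $\atkrset_\cov$ is a coverage-maximizing set of $\atkrbudget'$ points. For the other direction I would argue that the execution of \greedyatk\ in which every \chg\ call returns a point whose coverage is a $\beta$-fraction of the best single-point coverage, and in which every chosen point is inserted with multiplicity $k'$ (so it flips exactly the eval points it covers), is precisely $\beta$-approximate greedy set selection against $\coverage$ on the residual instance, and it runs for at least $\atkrbudget'$ rounds since each round spends at most $k'$ of the budget $\atkrbudget$ (an early exit of \chg\ returning $\perp$ only happens once all eval points are covered, which trivially beats the bound). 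I would then invoke --- re-deriving it with the extra $\beta$ factor if a citable form is not at hand --- the standard fact that $t$ rounds of a $\beta$-approximate greedy oracle for a monotone submodular $f$ with $f(\varnothing) = 0$ yield $f(S_t) \ge (1 - e^{-\beta t / q}) \max_{\abs{T} = q} f(T)$; taking $t \ge q = \atkrbudget'$ gives $\coverage(\atkrset^*) \ge (1 - e^{-\beta})\,\coverage(\atkrset_\cov)$ for the set $\atkrset^*$ returned by \greedyatk. (For $k = 1$ and $\beta = 1$ this is literally the Hochbaum--Pathria greedy bound, \citet{Hochbaum1998analysis}.)

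Chaining the pieces finishes the argument: since $\atkrset^*$'s points are inserted with multiplicity $k'$, $\tsi(\atkrset^*) = \coverage(\atkrset^*) \ge (1 - e^{-\beta})\,\coverage(\atkrset_\cov) \ge \tfrac{1}{k'}(1 - e^{-\beta})\,\coverage(\atkrset_\opt) \ge \tfrac{1}{k'}(1 - e^{-\beta})\,\tsi(\atkrset_\opt)$, which is the claimed $\tfrac{1}{k'}(1 - e^{-\beta})$-approximation; setting $\beta = 1$ recovers Theorem~\ref{thm:greedy_bound}, and $k = 1$ recovers the pure $(1 - 1/e)$ guarantee.

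The step I expect to be the real obstacle is reconciling the \emph{residual} nature of \greedyatk, which re-runs \chg\ on $\trset \cup \atkrset$, with the \emph{static} coverage function used in the analysis: one must check that after inserting a point with multiplicity $k'$, every eval point it covered now has an empty influencing region, and every still-unflipped eval point has residual influencing region contained in its original one, so that the marginal coverage gains telescope and each \chg\ call is a bona fide $\beta$-approximate oracle for the residual coverage gain. A smaller nuisance is the final round, where the leftover budget may be strictly below $k'$ so that the last inserted point need not flip every eval point it covers; I would absorb this by either assuming $k' \mid \atkrbudget$ for the clean statement or by charging the $O(1)$ shortfall against the $\atkrbudget' = \lceil \atkrbudget / k' \rceil$ rounding, which affects only lower-order terms --- and I should double-check whether the floor version $\lfloor \atkrbudget / k' \rfloor$ used in the proof sketch is indeed the intended accounting.
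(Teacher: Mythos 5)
Your proposal follows essentially the same route as the paper's proof: bound $\tsi$ by $\coverage$, partition the optimal $\atkrbudget$-point attack into $k'$ blocks and use subadditivity of $\coverage$ to compare against an optimal $\atkrbudget'$-point coverage set, invoke the Hochbaum--Pathria $\beta$-approximate greedy bound for the set $\atkrset^*$ returned by \greedyatk, and chain the inequalities using $\tsi(\atkrset^*) = \coverage(\atkrset^*)$. The floor-versus-ceiling issue you flag at the end is real but is present in the paper as well (it defines $\atkrbudget' = \lfloor \atkrbudget / k' \rfloor$ yet partitions $\atkrbudget$ points into $k'$ blocks of size at most $\atkrbudget'$, which requires $\lceil \atkrbudget / k' \rceil$), so your attempt matches the published argument in both substance and level of rigor.
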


Throughout the section, we fix a label $y$ that is the target label of all points in $\evalpool$. We first consider the case $k = 1$. In this case, the problem of computing an optimal $\atkrbudget$-point attack can be reduced to the \emph{maximum $\atkrbudget$-coverage problem}: given a universal set $U$, a weight function $w : U \to \R$, and a family $\calS$ of subsets of $U$, find $\atkrbudget$ subsets $S_1, S_2, \ldots, S_\atkrbudget \in \calS$ that maximizes $\sum_{(\x, y') \in S} w(\x)$ where $S = S_1 \cup \cdots \cup S_\atkrbudget$.

In our case, $U = \evalpool$, while $\calS$ is the family of subsets of $U$ consisting of sets of elements whose influencing regions mutually intersect: 
\[
\textstyle
\calS = \set{S \subseteq \evalpool \sucht \bigcap_{(\x, y) \in S} \ir(\x) \neq \varnothing}.
\]
For a point $(\x, y') \in \evalpool$ and target label $y$, the weight function $w$ is determined by
\begin{equation}
  \label{eqn:weight}
  w(\x) =
  \begin{cases}
    1 & \text{if } y = y' \text{ and } \learner[\x; D] \neq y\\
    -1 &\text{if } y \neq y' \text{ and } \learner[\x; D] = y'\\
    0 &\text{otherwise.}
  \end{cases}
\end{equation}
Therefore, the optimal single-point attack consists of finding $\arg \max_{S \in \calS} \sum_{\x \in S} w(\x)$. More generally, the optimal $\atkrbudget$-point attack consists of finding
\[
\arg\max_{S_1, \ldots, S_\atkrbudget \in \calS} w(S_1 \cup \cdots \cup S_{\atkrbudget}),
\]
where for $S \subseteq \evalpool$, we define
\[
w(S) = \sum_{(\x, y') \in S} w(\x).
\]
That is, in the case where $\abs{\atkrlabelset} = 1$, the optimal $\atkrbudget$-point attack reduces to solving the maximum $\atkrbudget$-coverage problem on $\calS$.

\cite{Hochbaum1998analysis} analyze a greedy algorithm for the maximum $\atkrbudget$-coverage problem. Specifically, they show that if $S_1, S_2, \ldots, S_\atkrbudget \in \calS$ are chosen such that for each $i$, $\abs{S_i} \geq \beta \max_{S} w(S \setminus (S_1 \cup \cdots \cup S_{i-1}))$, then $S_1, \ldots, S_\atkrbudget$ gives a $(1 - e^{-\beta})$ approximation for the maximum $\atkrbudget$-coverage problem. Since \greedyatk\ computes $\atkrset$ precisely in this greedy manner, the following lemma is an immediate consequence of a of Hochbaum and Pathria's main result.

\begin{lemma}[cf.\ Theorem~1 in~\cite{Hochbaum1998analysis}]
  \label{lem:greedy_bound}
  Suppose $k = 1$ and each call to \chg\ in \greedyatk\ returns a single-point attack whose score is a $\beta$-fraction of the optimal single point attack ($0 \leq \beta \leq 1$). Then the score of the $\atkrset$ returned by \greedyatk\ is a $(1 - e^{-\beta})$-fraction of the optimal $\atkrbudget$-point attack's score.
\end{lemma}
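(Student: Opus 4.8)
The plan is to specialize to the case $k = 1$ covered by this lemma and to check that, in this case, \greedyatk\ is \emph{exactly} the $\beta$-approximate greedy algorithm for the maximum $\atkrbudget$-coverage instance $(U,\calS,w)$ described above; the bound then follows directly from~\cite{Hochbaum1998analysis}. Since $k = 1$ gives $k' = \lceil 1/2\rceil = 1$, every iteration of \greedyatk\ calls \chg\ with multiplicity bound $\ell = 1$ (the minimum of the remaining budget and $k'$) and adds exactly one point, so \greedyatk\ performs exactly $\atkrbudget$ single-point rounds, producing points $\x_1, \ldots, \x_{\atkrbudget}$. Recall $U = \evalpool$, $\calS = \set{S_\x \sucht \x \in \Rd}$ with $S_\x = \set{(\x', y) \in \evalpool \sucht \x \in \IR(\x', y, \trset)}$, and $w$ from~(\ref{eqn:weight}), which here takes values in $\set{0,1}$ because every point of $\evalpool$ is labelled $y$. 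First I would record that for $k = 1$ the score is monotone in $\atkrset$ and equals the covered weight: a point $(\x', y) \in \evalpool$ with $w(\x') = 1$ is classified as $y$ under $\trset \cup \atkrset$ precisely when some point of $\atkrset$ lies in $\IR(\x', y, \trset)$, while points with $w(\x') = 0$ are unaffected; hence $\score(\atkrset) = w\paren{\bigcup_{\x \in \atkrset} S_\x}$.

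The crux is to verify that the $i$-th call to \chg, which is run on the augmented training set $\trset \cup \atkrset_{i-1}$ with $\atkrset_{i-1} = \set{\x_1, \ldots, \x_{i-1}}$, returns (up to the promised factor $\beta$) a point of maximum \emph{marginal} coverage relative to $\atkrset_{i-1}$. The key point is that the influencing regions are stable across rounds when $k = 1$: if $(\x', y) \in \evalpool$ is not yet classified as $y$ under $\trset \cup \atkrset_{i-1}$, then no $\x_\ell$ lies in $\IR(\x', y, \trset)$ --- otherwise $\x'$'s nearest neighbour would be that $y$-labelled point and $\x'$ would already be flipped --- so the distance from $\x'$ to its nearest neighbour, and hence its influencing region, is unchanged; and a point already classified as $y$ has empty influencing region. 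Therefore \chg\ run on $\trset \cup \atkrset_{i-1}$ sees exactly the balls $\ball_j = \IR(\x^{\mathrm{ev}}_j, y, \trset)$ of the not-yet-flipped points, and $\coverage(\x)$ equals $w\paren{S_\x \setminus (S_{\x_1} \cup \cdots \cup S_{\x_{i-1}})}$. The hypothesis that each call to \chg\ is within a $\beta$-factor of the optimal single-point attack is thus precisely the condition $w\paren{S_{\x_i} \setminus (S_{\x_1} \cup \cdots \cup S_{\x_{i-1}})} \geq \beta \max_{S \in \calS} w\paren{S \setminus (S_{\x_1} \cup \cdots \cup S_{\x_{i-1}})}$ assumed in the greedy analysis.

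Finally I would invoke Theorem~1 of~\cite{Hochbaum1998analysis}: $\atkrbudget$ rounds of such $\beta$-approximate greedy selection yield a family whose covered weight is at least $(1 - e^{-\beta})$ times the optimum of the maximum $\atkrbudget$-coverage instance. Since $\score(\atkrset)$ and $\score(\atkopt)$ are exactly the covered weights of the greedy and optimal families, this gives $\score(\atkrset) \geq (1 - e^{-\beta})\,\score(\atkopt)$. The main obstacle is the bookkeeping in the middle paragraph --- establishing that flipping an evaluation point only deletes its own ball and never perturbs the others, so that successive calls to \chg\ really do compute marginal coverage --- together with confirming that the budget arithmetic forces exactly $\atkrbudget$ single-point rounds when $k = 1$; once these are in place the statement is an immediate corollary of the maximum-coverage greedy bound.
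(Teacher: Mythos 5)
Your proposal is correct and follows essentially the same route as the paper: cast the $k=1$, $\atkrbudget$-point attack as a weighted maximum $\atkrbudget$-coverage instance on $U = \evalpool$ with $\calS$ the sets of mutually intersecting influencing regions, observe that \greedyatk\ is the $\beta$-approximate greedy algorithm for that instance, and invoke Theorem~1 of Hochbaum and Pathria. The paper asserts this equivalence as ``immediate,'' whereas you additionally verify the stability of the influencing regions across rounds (flipping a target point removes only its own ball) and the budget arithmetic forcing exactly $\atkrbudget$ single-point rounds --- welcome detail, but the same argument.
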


The lemma establishes Theorem~\ref{thm:greedy_bound_general} for the case $k = 1$. We now establish Theorem~\ref{thm:greedy_bound_general} for general $k$. The subtlety arises because it may no longer be the case that placing a single attack point $(\x, y) \in \IR(\x')$ changes $\learner$'s classification of $\x'$. However, placing $(\x, y)$ with multiplicity $k' = \lceil k / 2 \rceil$ will always change $\learner$'s classification (assuming $\learner[\x'; D] \neq y$). We assume that all points in $\evalpool$ have the same target label, $y$, so that every possible attack point has a non-negative contribution to $\atkr$'s score. Given an attack $\atkrset$, we define
\begin{align*}
  \tsi(\atkrset) &= \sum_{(\x, y) \in \evalpool} \mathbf{1}_{(\learner[\x; \trset \cup \atkrset] = y)} - \mathbf{1}_{(\learner[\x; \trset] = y)}\\
  \coverage(\atkrset) &= \sum_{(\x, y) \in \evalpool} \mathbf{1}_{\IR(\x) \cap \atkrset \neq \varnothing} \cdot w(\x).
\end{align*}
Again, by the assumption that all $(\x, y') \in \evalpool$ have $y' = y$, we have $w(\x) \geq 0$ for all $\x$. Similarly, we have
\[
\mathbf{1}_{(\learner[\x; \trset \cup \atkrset] = y)} - \mathbf{1}_{(\learner[\x; \trset] = y)} \geq 0.
\]
Additionally, we have
\[
\mathbf{1}_{(\learner[\x; \trset \cup \atkrset] = y)} - \mathbf{1}_{(\learner[\x; \trset] = y)} \leq \mathbf{1}_{\IR(\x) \cap \atkrset \neq \varnothing}w(\x).
\]
To see this, consider $(\x, y) \in \evalpool$ with $\learner[\x; \trset] \neq y'$. Then $\IR(\x) \cap \atkrset \neq \varnothing$ is necessary and sufficient to have $w(\x) = 1$. This condition is also necessary, though not generally sufficient, to have $\mathbf{1}_{(\learner[\x; \trset \cup \atkrset] = y)} - \mathbf{1}_{(\learner[\x; \trset] = y)} = 1$ because adding fewer than $k'$ points in $\IR(\x)$ to $\trset$ may not change $\learner[\x; \trset]$. Thus, in general, we have
\begin{equation}
  \label{eqn:score-coverage}
  \tsi(\atkrset) \leq \coverage(\atkrset) \quad\text{for all}\quad \atkrset.
\end{equation}

We note that for $k = 1$, we do have $\tsi(\atkrset) = \coverage(\atkrset)$. More generally, if $\atkrset$ has the property that all points in $\atkrset$ appear with sufficient multiplicity (that is at most $k'$), then we can also guarantee that $\tsi(\atkrset) = \coverage(\atkrset)$. Since \greedyatk\ always adds points to $\atkrset$ with sufficient multiplicity to change classifications, we have the following.

\begin{lemma}
  \label{obs:greedy}
  For any attack $\atkrset$ returned by \greedyatk, we have
  \[
  \tsi(\atkrset) = \coverage(\atkrset).
  \]
\end{lemma}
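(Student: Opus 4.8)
The plan is to prove the two inequalities $\tsi(\atkrset) \le \coverage(\atkrset)$ and $\coverage(\atkrset) \le \tsi(\atkrset)$. The first is exactly Equation~(\ref{eqn:score-coverage}) and holds for \emph{every} attack $\atkrset$, so all the work is in the second inequality, which I claim holds for the particular $\atkrset$ produced by \greedyatk. Unwinding the definitions, $\coverage(\atkrset) \le \tsi(\atkrset)$ says precisely that every target point $(\x', y) \in \evalpool$ that is \emph{covered} by $\atkrset$---i.e.\ one with $\learner[\x'; \trset] \neq y$ (so $w(\x') = 1$) and $\IR(\x', y, \trset) \cap \atkrset \neq \varnothing$---actually has its label flipped, $\learner[\x'; \trset \cup \atkrset] = y$, so that it contributes to $\tsi$ as well. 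Thus the task reduces to showing that \greedyatk\ never lands an inserted point inside an influencing region without, in the end, flipping the corresponding target point.

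Two monotonicity facts about inserting target-labeled points into a training set carry the argument. (i) \emph{Classification is monotone}: if $A \subseteq B$ with every point of $B \setminus A$ labeled $y$ and $\learner[\x'; \trset \cup A] = y$, then $\learner[\x'; \trset \cup B] = y$. This follows from a short count over the $k$ nearest neighbors of $\x'$: passing from $\trset \cup A$ to $\trset \cup B$ only replaces some current neighbors by new points labeled $y$, so the number of $y$-labeled neighbors cannot decrease and no other label's count can increase, and a strict plurality for $y$ is preserved. (ii) \emph{Influencing regions only grow and costs only shrink, until the target is flipped}: if $A$ is a set of $y$-labeled points and $\learner[\x'; \trset \cup A] \neq y$, then $\IR(\x', y, \trset) \subseteq \IR(\x', y, \trset \cup A)$ and $c(\x', \trset \cup A) \le c(\x', \trset) \le \lceil k/2 \rceil$. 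This is a case analysis of how the sorted neighbor list of $\x'$ reshuffles: points of $A$ outside the ball $\IR(\x', y, \trset)$ cannot change its first $j$ neighbors, while each point of $A$ inside the ball effectively ``pre-pays'' one unit of the cost and can only push the boundary outward; in every case the number of remaining $y$-relabelings needed drops and the radius does not.

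Now I walk through \greedyatk\ iteration by iteration. Fix a covered $(\x', y)$ and any point $\x \in \atkrset \cap \IR(\x', y, \trset)$; let $t$ be the iteration that inserts $\x$ and $\atkrset_{<t}$ the (target-labeled) points inserted earlier. If $\learner[\x'; \trset \cup \atkrset_{<t}] = y$ already, then fact (i) applied to $\atkrset_{<t} \subseteq \atkrset$ gives $\learner[\x'; \trset \cup \atkrset] = y$. Otherwise fact (ii) with $A = \atkrset_{<t}$ gives $\x \in \IR(\x', y, \trset) \subseteq \IR(\x', y, \trset \cup \atkrset_{<t})$ and residual cost $c(\x', \trset \cup \atkrset_{<t}) \le \lceil k/2 \rceil$; by design \greedyatk\ inserts $\x$ with enough copies to flip every influencing region of $\trset \cup \atkrset_{<t}$ containing it, in particular $\IR(\x', y, \trset \cup \atkrset_{<t})$, so $\x'$ is flipped to $y$ at iteration $t$, and fact (i) shows it stays flipped once the remaining iterations' points are added. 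Either way $\x'$ contributes to $\tsi(\atkrset)$, so $\coverage(\atkrset) \le \tsi(\atkrset)$, and combined with Equation~(\ref{eqn:score-coverage}) this yields $\tsi(\atkrset) = \coverage(\atkrset)$.

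The main obstacle is fact (ii): controlling the influencing region of a fixed $\x'$ as the attack set grows, especially the mixed case in which some inserted points fall inside the current ball but are too few to flip $\x'$, where one must argue that this partial progress shows up exactly as a reduced residual cost and a weakly larger radius (this containment is what licenses passing from the \emph{original} region $\IR(\x',y,\trset)$ used in $\coverage$ to the current region that \greedyatk\ actually operates on). Two secondary points remain to pin down: that the multiplicity returned by \chg\ really does suffice to flip every influencing region through the chosen point, and that in \greedyatk's last, budget-limited iteration---when fewer than $\lceil k/2 \rceil$ insertions remain---the budget accounting does not count a target toward $\coverage$ unless the covering point is inserted with enough copies to flip it (equivalently, one may simply require that \greedyatk\ not begin an iteration with remaining budget below $\lceil k/2 \rceil$, consistent with the analysis of Theorem~\ref{thm:greedy_bound_general}).
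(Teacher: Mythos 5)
Your proposal is correct and follows essentially the same route as the paper, which justifies this lemma only with the one-sentence observation that \greedyatk\ always inserts points with sufficient multiplicity to change classifications (so every covered target is in fact flipped, giving $\coverage(\atkrset) \leq \tsi(\atkrset)$, while the reverse inequality holds for all attacks). You supply the monotonicity arguments and the caveats (original versus updated influencing regions, and the final budget-limited iteration) that the paper leaves implicit, and these fill genuine gaps in the paper's own exposition rather than diverging from it.
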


We also note that the number of \emph{distinct} points in $\atkrset$ is at least $\atkrbudget' = \lfloor \atkrbudget / k' \rfloor$. Since the distinct points are chosen greedily by \greedyatk, we can apply the analysis of~\cite{Hochbaum1998analysis} to obtain the following lemma.

\begin{lemma}
  \label{lem:greedy-coverage}
  Let $\atkrbudget' = \lfloor \atkrbudget / k' \rfloor$, and let $\atkcov'$ be a maximal coverage $\atkrbudget'$-point \emph{coverage} set. That is,
  \[
  \coverage(\atkcov') = \max_{\atkrset : \abs{\atkrset} = \atkrbudget'} \coverage(\atkrset).
  \]
  and let $\atkrset^*$ be an attack returned by \greedyatk during an execution in which each call to \chg\ returns a point covering a $\beta$-fraction of the optimal coverage single point attack. Then
  \[
  \coverage(\atkcov') \leq (1 - 1/e^{\beta})^{-1} \coverage(\atkrset^*).
  \]
\end{lemma}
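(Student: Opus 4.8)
The plan is to read the claim off as the standard greedy guarantee for weighted maximum coverage, applied to the very set system that already appeared in the $k=1$ analysis, and then to check that \greedyatk\ actually executes a $\beta$-approximate version of that greedy procedure for at least $\atkrbudget'$ steps. Concretely, I would (1) cast $\coverage(\cdot)$ as a monotone submodular coverage objective on $\evalpool$, (2) show that the distinct points inserted by \greedyatk\ form a length-$\atkrbudget'$ $\beta$-greedy sequence for it, and (3) quote Theorem~1 of~\cite{Hochbaum1998analysis} and rearrange.

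\textbf{Step 1: $\coverage$ as a coverage objective.} For $\x \in \Rd$ put $S(\x) = \set{(\x', y') \in \evalpool \sucht \x \in \IR(\x', y, \trset)}$ and $\calS = \set{S(\x) \sucht \x \in \Rd}$. Since each $\IR(\x', y, \trset)$ is a ball, $\calS$ is precisely the family of ``mutually intersecting influencing region'' subsets of $\evalpool$ used in the $k=1$ reduction. With the weights $w$ of~(\ref{eqn:weight}) — all nonnegative here, because every target carries the same label $y$ — one has $\coverage(\atkrset) = w\!\left(\bigcup_{\x \in \atkrset} S(\x)\right)$, a monotone submodular coverage function, so $\coverage(\atkcov')$ is exactly the optimum of weighted maximum $\atkrbudget'$-coverage on $(\evalpool, w, \calS)$.

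\textbf{Step 2: \greedyatk\ is a $\beta$-greedy run of length $\geq \atkrbudget'$.} Because \greedyatk\ always inserts a chosen point with multiplicity at least the largest cost in the hyper-edge it covers, a target $(\x',y')$ is classified $y$ after distinct points $\x_1,\dots,\x_i$ have been added iff $\set{\x_1,\dots,\x_i}$ meets $\IR(\x', y, \trset)$, i.e.\ iff $(\x',y') \in S(\x_1)\cup\cdots\cup S(\x_i)$; thus the already-covered targets are exactly $\bigcup_{t\le i} S(\x_t)$ (this is the content of Lemma~\ref{obs:greedy}). Hence the single-point attack \chg\ (approximately) returns when run on $\trset \cup \atkrset$ is a point whose marginal coverage gain $\coverage(\atkrset\cup\set{\x}) - \coverage(\atkrset)$ is within a $\beta$-factor of the best possible marginal gain: already-covered targets have value $0$ in \chg's input, and every remaining target contributes its (nonnegative) marginal. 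Finally, each call to \chg\ consumes at most $k'=\lceil k/2\rceil$ units of budget, so \greedyatk\ adds at least $\atkrbudget'=\lfloor \atkrbudget/k'\rfloor$ distinct points $\x_1,\x_2,\dots$, and this sequence is $\beta$-greedy for $(\evalpool,w,\calS)$. Plugging into the Hochbaum--Pathria bound (Theorem~1 of~\cite{Hochbaum1998analysis}, the engine behind Lemma~\ref{lem:greedy_bound}) gives $\coverage(\atkrset^*) \ge (1-e^{-\beta})\,\coverage(\atkcov')$, and dividing by $1-e^{-\beta}$ is the claim.

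\textbf{Main obstacle.} The delicate step is Step~2: \chg\ is invoked on $\trset \cup \atkrset$, and for $k>1$ the influencing region of a not-yet-flipped target in that enlarged instance can be strictly \emph{larger} than its influencing region with respect to $\trset$, so one must rule out \chg's choice being charged with ``phantom'' coverage outside $\calS$. I would handle this as in the main-text sketch: coverage is monotone and the inequality $\tsi(\atkrset)\le\coverage(\atkrset)$ holds for every $\atkrset$ (Equation~(\ref{eqn:score-coverage})), while \greedyatk's use of sufficient multiplicity forces the reverse inequality on its own output, yielding $\tsi(\atkrset^*)=\coverage(\atkrset^*)$ (Lemma~\ref{obs:greedy}); this identity, together with the fact that for $k=1$ previous insertions leave the influencing regions of unflipped targets unchanged, is what keeps the greedy coverage analysis valid for the \emph{fixed} set system built from $\trset$.
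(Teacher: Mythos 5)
Your proposal is correct and takes essentially the same route as the paper: both reduce the claim to the Hochbaum--Pathria guarantee for weighted maximum $\atkrbudget'$-coverage, applied to the at least $\lfloor \atkrbudget / k' \rfloor$ distinct points that \greedyatk\ selects greedily. The paper in fact proves this lemma in a single sentence by citing that analysis directly, so your write-up is strictly more detailed --- in particular, the subtlety you flag in Step~2, that \chg\ is invoked on $\trset \cup \atkrset$ and so the influencing regions of unflipped targets can grow when $k > 1$, is not addressed in the paper at all.
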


We now have all of the prerequisites to prove Theorem~\ref{thm:greedy_bound_general}.

\begin{proof}[Proof of Theorem~\ref{thm:greedy_bound_general}]
  Consider an optimal $\atkrbudget$-point attack $\atkopt$. That is, an attack that satisfies
  \[
  \tsi(\atkopt) = \max_{\atkrset : \abs{\atkrset} = \atkrbudget} \tsi(\atkrset).
  \]
  Define $k' = \lceil k / 2 \rceil$ and $\atkrbudget' = \lfloor \atkrbudget / k' \rfloor$, and let be an optimal cover  $\atkcov$ be an optimal \emph{coverage} $b'$-point attack. That is,
  \[
  \coverage(\atkcov) = \max_{\atkrset : \abs{\atkrset} = \atkrbudget'} \coverage(\atkrset).
  \]
  Finally, let $\atkrset^*$ be an attack returned by \greedyatk. Then we have
  \begin{align}
    \tsi(\atkopt) &\leq \coverage(\atkopt) \label{eqn:opt-coverage}\\
    &\leq k' \coverage(\atkcov)\label{eqn:sub-additive}\\
    &\leq k' (1 - 1/e)^{-1} \coverage(\atkrset^*)\label{eqn:partial-coverage}\\
    &= k' (1 - 1/e)^{-1} \tsi(\atkrset^*)\label{eqn:partial-score}.
  \end{align}
  Combining these expressions, we get
  \[
  \frac{1}{k} (1 - 1/e^\beta) \tsi(\atkopt) \leq \tsi(\atkrset^*),
  \]
  which gives the desired result. The justifications of the computations above are as follows: (\ref{eqn:opt-coverage}) follows from~(\ref{eqn:score-coverage}). Equation~(\ref{eqn:sub-additive}) follows from the sub-additivity of $\coverage$: for any sets $A$ and $B$, $\coverage(A \cup B) \leq \coverage(A) + \coverage(B)$. Partition $\atkopt$ into $k'$ sets of size at most $\atkrbudget'$, say $\atkopt = A_1 \cup A_2 \cup \cdots \cup A_{k'}$. Then we have $\coverage(A_i) \leq \coverage(\atkcov)$ for all $i$, whence~(\ref{eqn:sub-additive}) follows. Equation~(\ref{eqn:partial-coverage}) follows from Lemma~\ref{lem:greedy-coverage}, and Equation~(\ref{eqn:partial-score}) follows from Equation~(\ref{eqn:score-coverage}).
\end{proof}

\subsection*{An additional optimization}

We offer the following practical optimization to \chg~ (noting that it may, of course, still take exponential time).
We replace line~\ref{line:shortcircuit} in Algorithm~\ref{alg:chg} with the following:
\begin{align}
\wedge \left(|\potentialedge| - 1  \geq d + 1 \vee \text{(\ref{eqn:qclp}) has a solution}\right)
\end{align}
Then, instead of maintaining the point \(\mathbf x\), we store the hyper-edge \(\mathbf s\) and solve the QCLP only to find the ultimate attack point.
By Helly's theorem and the fact that hyperspheres are convex, if all subsets of \(\potentialedge\) of size \(d+1\) are in \(E\) (have a point of mutual intersection), then \(\potentialedge\) itself is a hyper-edge and there is no need to invoke the QCLP solver.
Note that in checking all subsets of size \(|\potentialedge| - 1 > d + 1\), we are implicitly checking all subsets of size \(d+1\).
Therefore, when \(|\potentialedge| > d+1\) then this condition on both necessary \emph{ and sufficient}, thus there is no need to check if~(\ref{eqn:qclp}) has a solution.

Note that this optimization is not possible when \chg~ is not run to completion.
While Helly's theorem guarantees the existence of a region of mutual overlap, it provides no witness (no point within that region).
Such a witness is necessary to compute the \tlr~ when terminating before all hyper-edges are constructed (but \tlr~ can be computed given a maximal hyper-edge directly).

\vfill \pagebreak

\section*{Appendix D: Additional Experiment Results}

Tables~\ref{tab:normal}, \ref{tab:uniform} show \atkr's score (out of 10) for various dimensions as training set sizes.
\begin{table}[h]
\centering
\begin{tabular}{|l||r|r|r|r|r|}
\hline
\(d\) &   \(|\trset|=8\)  &   \(|\trset|=16\)  &   \(|\trset|=32\)  &   \(|\trset|=64\)  &   \(|\trset|=128\) \\\hline
\hline
 2  &   3.7 (3.5) &   3.3 (3.3) &   2.6 (2.5) &   2.0 (2.2) &   2.0 (2.0) \\\hline
 4  &   4.7 (4.8) &   4.6 (4.9) &   3.8 (3.7) &   2.7 (2.8) &   2.6 (2.4) \\\hline
 8  &   7.6 (7.0) &   6.2 (5.7) &   5.3 (5.0) &   3.9 (3.9) &   3.6 (3.9) \\\hline
 16 &   9.7 (9.1) &   9.5 (8.3) &   8.9 (7.6) &   7.5 (6.1) &   6.1 (5.8) \\\hline
 32 &  10.0 (9.7) &  10.0 (9.1) &  10.0 (9.1) &  10.0 (8.4) &  10.0 (8.0) \\\hline
\end{tabular}
\caption{{\bf Results on \texttt{Normal}}. Scores of optimal
  one point attack for experiments described in Section~\ref{synthexp} 
  under $\ell_2$ norm ($\ell_\infty$ norm).
  Instances are constructed by sampling a training set of $|\trset|$ points in $d$ dimensions from
  $\mathrm{Normal}(\mathbf{0}, \mathbf{I}_d)$ and sampling 10 evaluation points
  from the same distribution. Reported scores are a mean over 10 instances. As
  dimensionality rises, the score of the optimal attack rises. As training set
  sizes rise, the score of the optimal attack falls in all but \(d=32\) (where it achieves the maximum possible score of 10).}
\label{tab:normal}
\end{table}

\begin{table}[h]
\centering
\begin{tabular}{|l||r|r|r|r|r|}
\hline
\(d\) &   \(|\trset|=8\)  &   \(|\trset|=16\)  &   \(|\trset|=32\)  &   \(|\trset|=64\)  &   \(|\trset|=128\) \\ \hline
\hline
 2  &   4.2 (4.2) &   3.5 (3.3) &   2.4 (2.5) &   1.9 (1.9) &   1.5 (1.6) \\\hline
 4  &   5.2 (5.3) &   3.9 (4.3) &   3.4 (3.2) &   2.4 (2.8) &   2.2 (2.2) \\\hline
 8  &   8.3 (8.9) &   6.9 (8.0) &   4.9 (5.8) &   4.0 (4.9) &   3.1 (2.9) \\\hline
 16 &   9.9 (9.8) &   9.2 (9.9) &   9.1 (9.8) &   7.6 (9.4) &   6.0 (8.1) \\\hline
 32 &  10.0 (10.0) &  10.0 (10.0) &  10.0 (10.0) &  10.0 (10.0)&  10.0 (10.0) \\\hline
\end{tabular}
\caption{{\bf Results on \texttt{Uniform}}. Scores of optimal
  one point attack for experiments described in Section~\ref{synthexp} 
  under $\ell_2$ norm ($\ell_\infty$ norm).
  Instances are constructed by sampling a training set of $|\trset|$ points in $d$ dimensions from
  $\mathrm{Uniform}([0,1]^d)$ and sampling 10 evaluation points
  from the same distribution. Reported scores are a mean over 10 instances.
  Results are qualitative similar to those in Table~\ref{tab:normal} in that
  increasing dimensionality increases the score of the optimal attack while
  increasing training set size reduces it.} 
\label{tab:uniform}
\end{table}

\begin{table}[h]
\centering
\begin{tabular}{|l||r|r|r|r|r|}
\hline
\(d\) &   \(|\trset|=8\)  &   \(|\trset|=16\)  &   \(|\trset|=32\)  &   \(|\trset|=64\)  &   \(|\trset|=128\) \\\hline
\hline
 2  &  0.300 (0.224) &  0.213 (0.300) &  0.163 (0.224) &  0.000 (0.133) &  0.149 (0.149) \\\hline
 4  &  0.213 (0.200) &  0.340 (0.314) &  0.359 (0.367) &  0.260 (0.327) &  0.340 (0.340) \\\hline
 8  &  0.400 (0.211) &  0.200 (0.396) &  0.396 (0.422) &  0.100 (0.180) &  0.221 (0.348) \\\hline
 16 &  0.153 (0.233) &  0.224 (0.213) &  0.233 (0.267) &  0.453 (0.314) &  0.233 (0.291) \\\hline
 32 &  0.000 (0.153) &  0.000 (0.233) &  0.000 (0.180) &  0.000 (0.306) &  0.000 (0.333) \\\hline
\end{tabular}
\caption{{\bf Standard Error of the Mean Values for Table~\ref{tab:normal}.}}
\end{table}

\begin{table}[h]
\centering
\begin{tabular}{|l||r|r|r|r|r|}
\hline
\(d\) &   \(|\trset|=8\)  &   \(|\trset|=16\)  &   \(|\trset|=32\)  &   \(|\trset|=64\)  &   \(|\trset|=128\) \\\hline
\hline
 2  &  0.200 (0.327) &  0.269 (0.213) &  0.163 (0.224) &  0.100 (0.100) &  0.167 (0.167) \\\hline
 4  &  0.249 (0.396) &  0.314 (0.300) &  0.221 (0.249) &  0.163 (0.200) &  0.133 (0.133) \\\hline
 8  &  0.335 (0.314) &  0.504 (0.333) &  0.277 (0.533) &  0.258 (0.314) &  0.180 (0.180) \\\hline
 16 &  0.100 (0.133) &  0.200 (0.100) &  0.277 (0.133) &  0.306 (0.221) &  0.298 (0.379) \\\hline
 32 &  0.000 (0.000) &  0.000 (0.000) &  0.000 (0.000) &  0.000 (0.000) &  0.000 (0.000) \\\hline
\end{tabular}
\caption{{\bf Standard Error of the Mean  Values for Table~\ref{tab:uniform}.}}
\end{table}

Figure~\ref{fig:pdmadelon} plots the attacker's effectiveness for two additional
datasets, \texttt{Madelon}, an artificial dataset constructed as part of a NIPS
2003 challenge (\cite{guyon2004result}), and \texttt{Parkinson}, a dataset of
sound recordings from 20 Parkinson's Disease patients and 20 healthy subjects
(\cite{sakar2013collection}). Tables~\ref{tab:pdmadelon} reports zero-one losses
and the proportion of variance explained.

\begin{figure}[t]
  \centering
  \begin{tabular}{cc}
    \includegraphics[width=0.5\columnwidth]{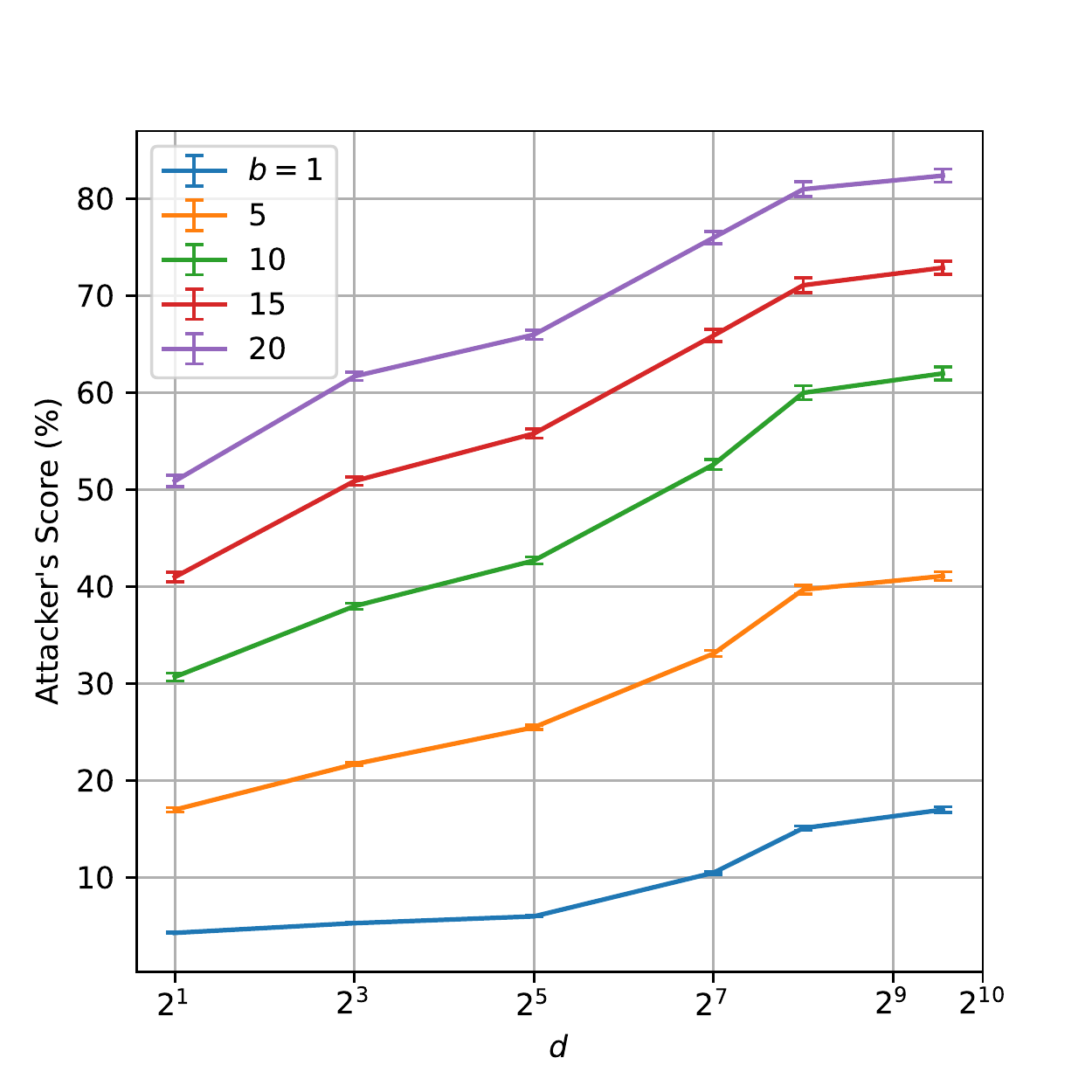} & \includegraphics[width=0.5\columnwidth]{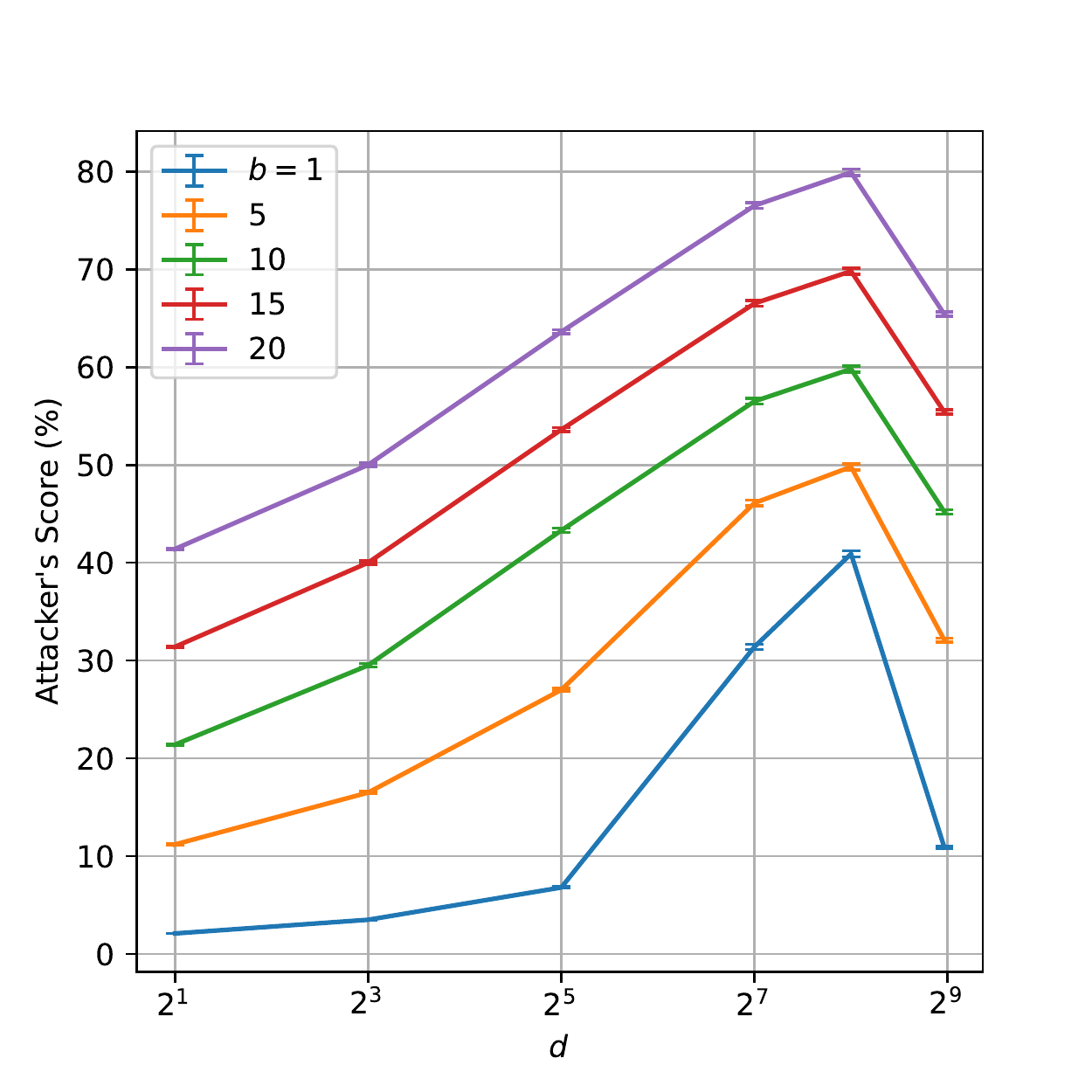} \\
    \includegraphics[width=0.5\columnwidth]{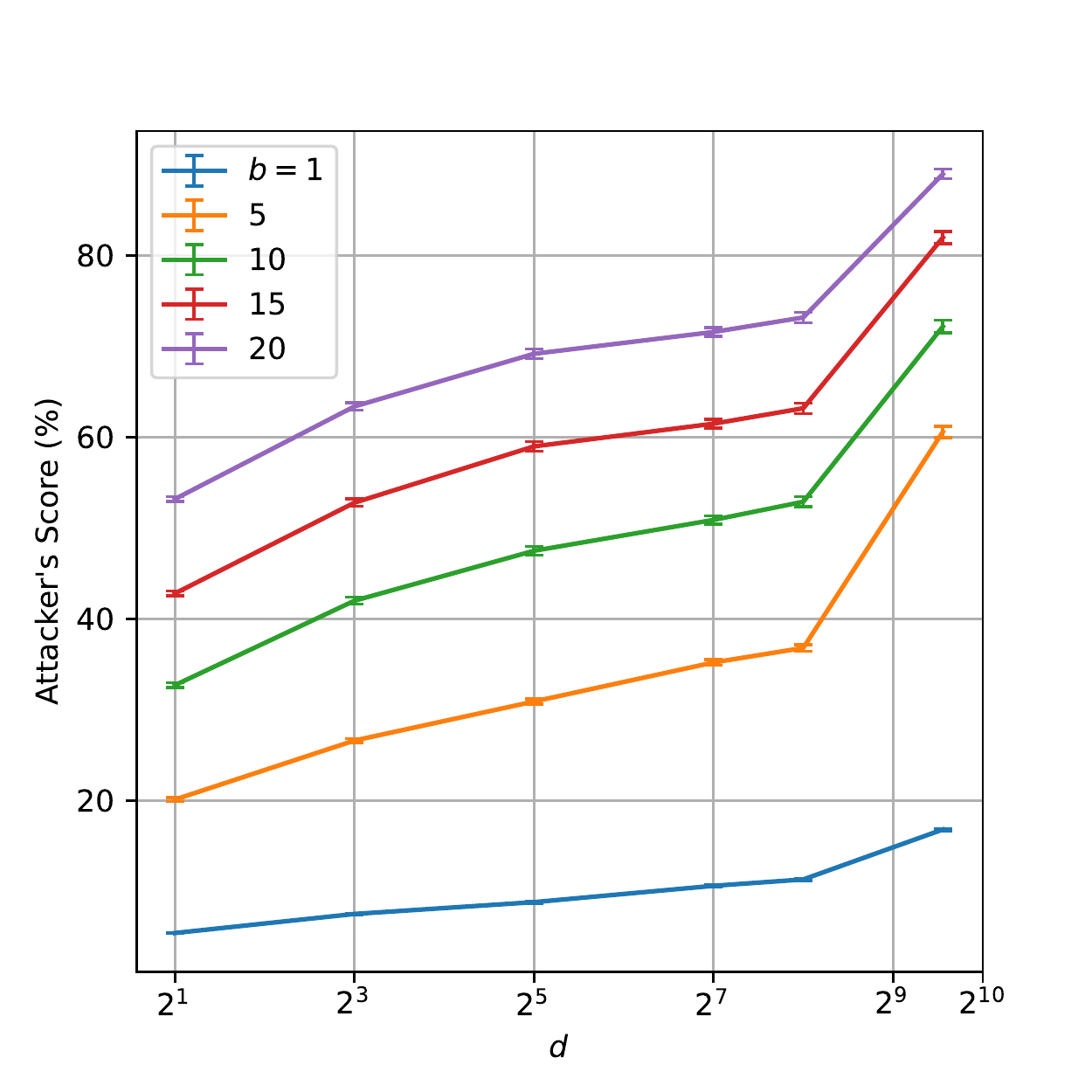} & \includegraphics[width=0.5\columnwidth]{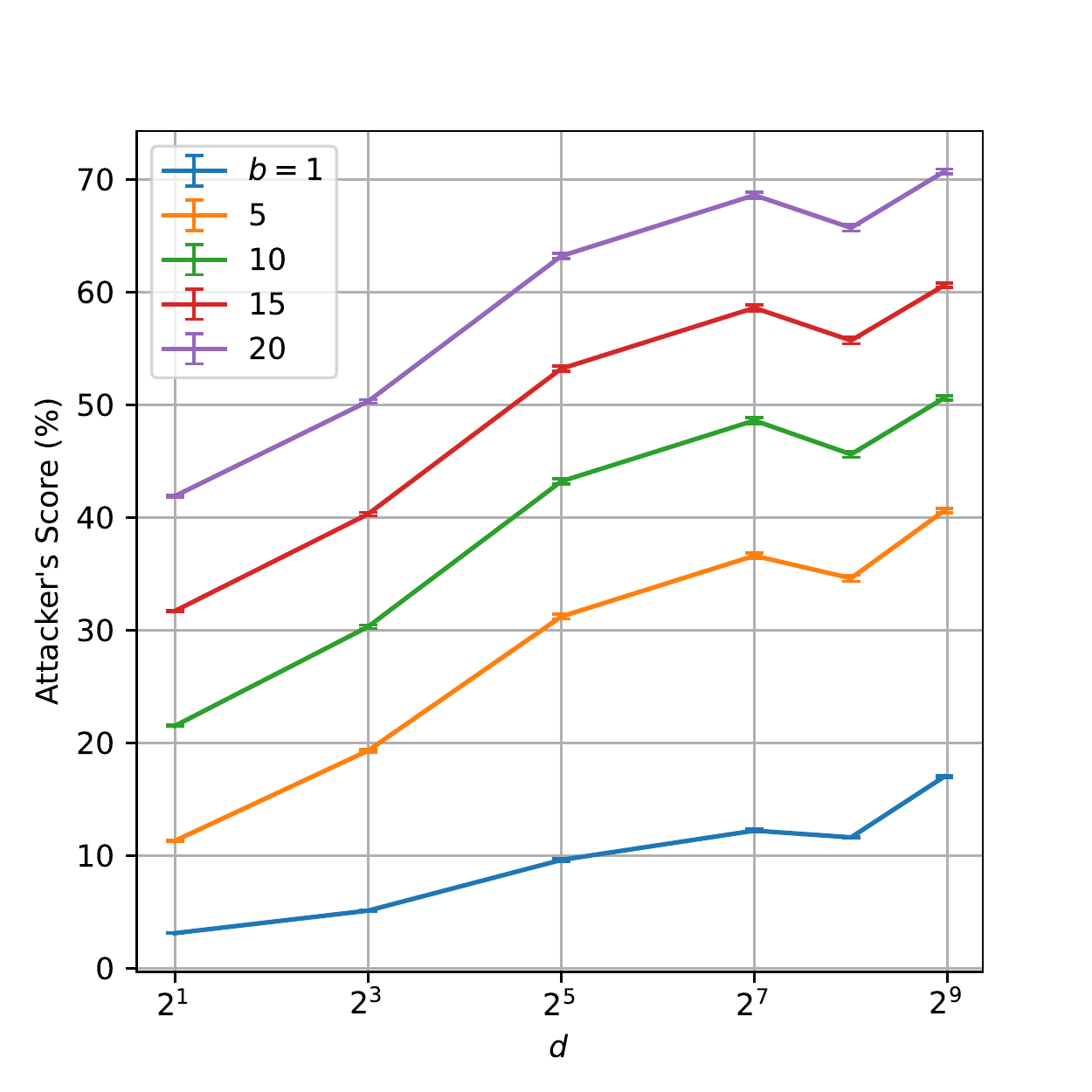}
  \end{tabular}
  \caption{{\bf Attacker's Effectiveness as a Function of Dimension on
      Additional Datasets.} On the left is the \texttt{Madelon} dataset
    (available at https://archive.ics.uci.edu/ml/datasets/Madelon), while on the
    right is the \texttt{Parkinson} dataset (available at
    https://archive.ics.uci.edu/ml/datasets/parkinsons). Top row is under
    $\ell_2$-norm while bottom row is under $\ell_\infty$-norm. Dimensionality was reduced
    via PCA up to various dimensions. We plot \atkr's score (as a percentage of
    her total evaluation pool) averaged over 10 trials for each label, for various attacker
    budgets. Error bars denote standard error of the mean. }
  \label{fig:pdmadelon}
\end{figure}

\begin{figure}[t]
  \centering
  \begin{tabular}{cc}
    \includegraphics[width=0.5\columnwidth]{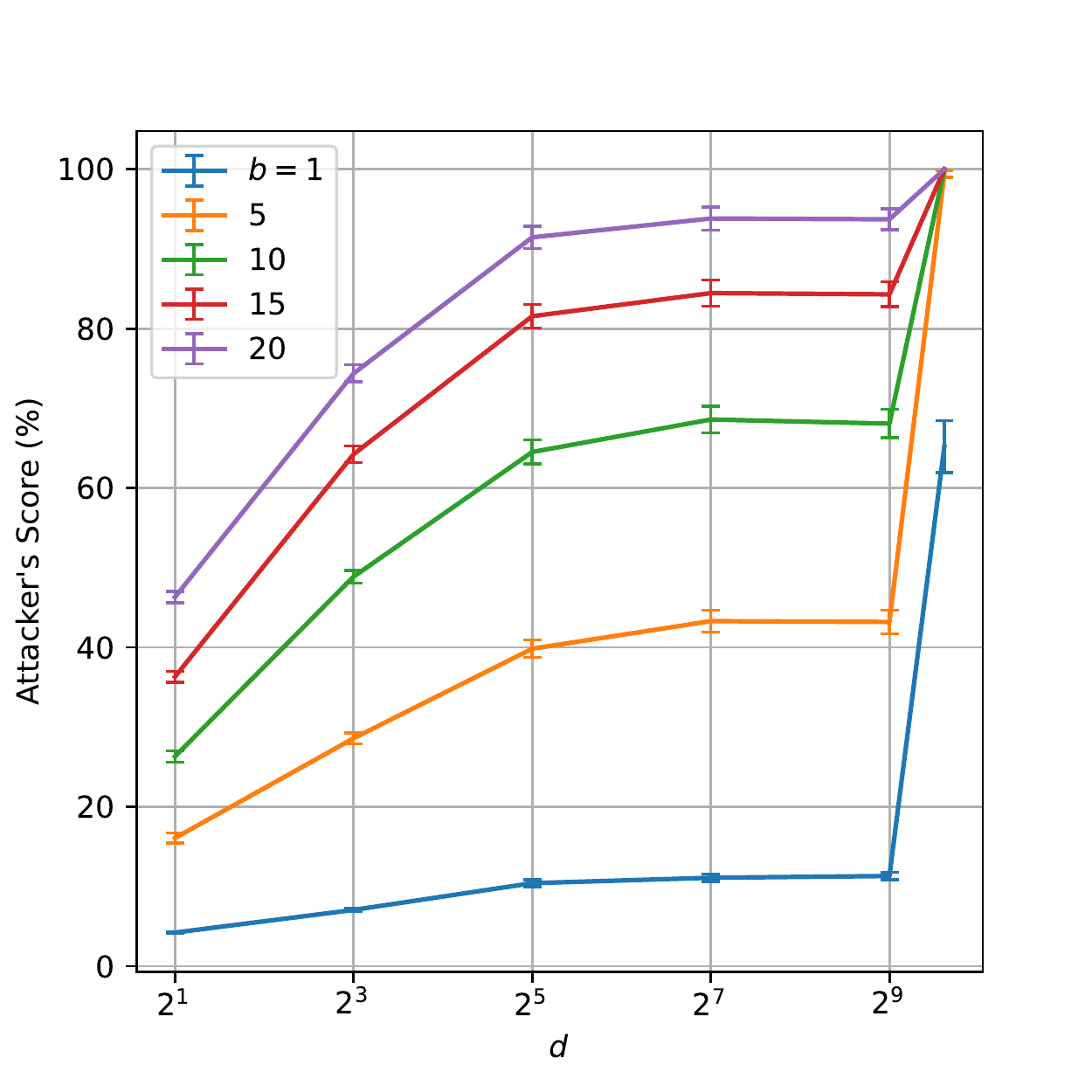} & \includegraphics[width=0.5\columnwidth]{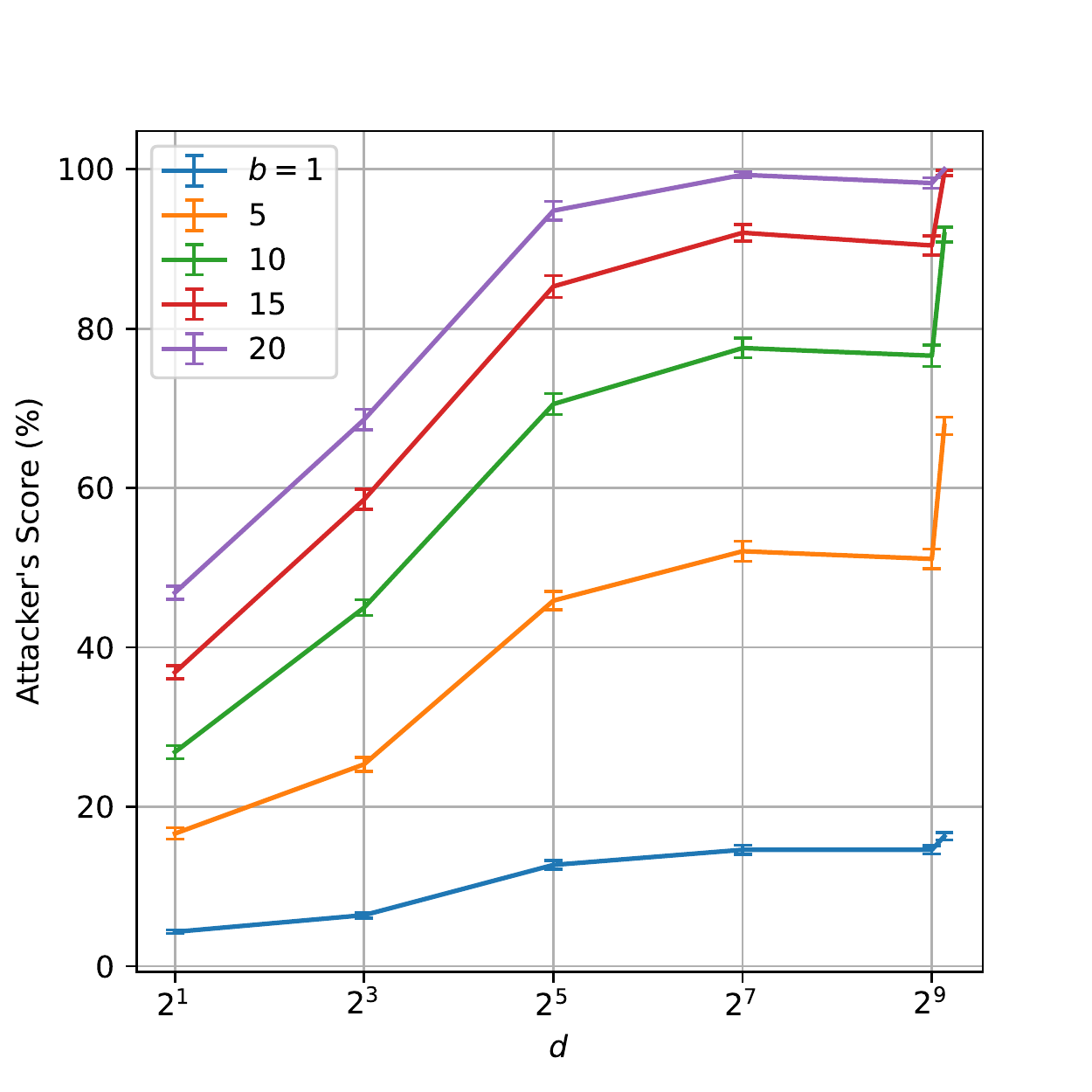} \\
  \end{tabular}
  \caption{{\bf Attacker's Effectiveness as a Function of Dimension on of MNIST
      and HAPT Datasets under $\ell_\infty$-norm.} On the left is the \texttt{MNIST}
    dataset, on the right is the \texttt{HAPT} dataset. Dimensionality was
    reduced via PCA up to various dimensions. We plot \atkr's score (as a
    percentage of her total evaluation pool) averaged over 10 trials for each label, for
    various attacker budgets. Error bars denote standard error of the mean. }
  \label{fig:mnisthaptinf}
\end{figure}

\begin{SCtable}
  \centering
  \ifsqueeze \small \fi
\begin{tabular}{|c|c|c|c|c|}\hline
                                & \multicolumn{2}{c|}{Madelon} & \multicolumn{2}{c|}{Parkinson} \\ \hline
\multicolumn{1}{|c|}{\(d\)} & \texttt{HOL}     & \texttt{VarE}             & \texttt{HOL}         & \texttt{VarE}            \\ \hline\hline
\multicolumn{1}{|c|}{Original}  & 0.347 (0.430)           & 1.000                     & 0.114 (0.291)               & 1.000          \\ \hline 
\multicolumn{1}{|c|}{128}       & 0.346 (0.360)           & 0.730                     & 0.123 (0.162)               & 0.913     \\ \hline
\multicolumn{1}{|c|}{32}        & 0.267 (0.317)           & 0.412                     & 0.152 (0.234)               & 0.702     \\ \hline
\multicolumn{1}{|c|}{8}         & 0.180 (0.203)           & 0.294                     & 0.194 (0.173)               & 0.462      \\ \hline
\multicolumn{1}{|c|}{2}         & 0.416 (0.429)           & 0.189                     & 0.256 (0.280)               & 0.223     \\ \hline
\end{tabular}
\caption{{\bf Loss and Variance Explained for Additional Datasets.} We report the zero-one loss on a held out dataset (\texttt{HOL}) and the proportion of variance explained (\texttt{VarE}) for the each dataset (the original and those with dimension reduced by PCA).}
\label{tab:pdmadelon}
\end{SCtable}

\else

\section*{Appendix A: Additional Proofs}

\vfill \pagebreak

\section*{Appendix B: Additional Experiment Results}

\fi

\end{onecolumn}

\fi
\end{document}